\documentclass{article} 
\usepackage{iclr2027_conference,times}

\usepackage{amsmath,amsfonts,bm}

\def\eqref#1{equation~\ref{#1}}

\def\1{\bm{1}}

\DeclareMathAlphabet{\mathsfit}{\encodingdefault}{\sfdefault}{m}{sl}
\SetMathAlphabet{\mathsfit}{bold}{\encodingdefault}{\sfdefault}{bx}{n}

\iclrfinalcopy  
\usepackage{hyperref}
\usepackage{url}

\usepackage[utf8]{inputenc} 
\usepackage[T1]{fontenc}    
\usepackage{hyperref}       
\usepackage{url}            
\usepackage{booktabs}       
\usepackage{amsfonts}       
\usepackage{nicefrac}       
\usepackage{microtype}      
\usepackage{xcolor}         

\usepackage{amsmath}
\usepackage{amssymb}
\usepackage{mathtools}
\usepackage{amsthm}
\usepackage{comment}
\usepackage{graphicx}
\usepackage{subcaption}

\theoremstyle{plain}
\newtheorem{theorem}{Theorem}[section]

\theoremstyle{definition}

\newtheorem{assumption}[theorem]{Assumption}
\theoremstyle{remark}

\usepackage{amsthm}

\usepackage{multirow}

\usepackage{titletoc}
\usepackage{xcolor}         

\usepackage{tabularx}
\usepackage{graphicx}
\usepackage{colortbl}

\definecolor{LightMint}{RGB}{170,235,205}

\definecolor{LightRed}{RGB}{255,182,193} 
\definecolor{LightBlue}{RGB}{173, 216, 230}
\definecolor{LightGreen}{RGB}{144, 238, 144}

\definecolor{LightMint}{RGB}{213,245,230}
\definecolor{LightRed}{RGB}{255,219,224}
\definecolor{LightBlue}{RGB}{214,236,243}
\definecolor{LightGreen}{RGB}{200,247,200}

\colorlet{shadecolor}{LightGreen}
\colorlet{shadecolor}{LightBlue}
\colorlet{shadecolor}{LightRed}

\usepackage{enumitem}
\setlist[enumerate]{leftmargin=3em}

\usepackage{arydshln} 
\usepackage{algorithm}
\usepackage{algpseudocode}

\usepackage{graphicx}
\usepackage{wrapfig}

\title{FedMIX-P: Mixing Local and Global Preconditioners for Federated Vision and Language Model Training}

\author{ Junkang Liu\textsuperscript{1}\\
	Tianjin University\\
	\texttt{\{junkangliukk\}@gmail.com}
}

\begin{document}

	\maketitle
	\pagestyle{plain}
	\thispagestyle{plain}
	
\begin{abstract}
Adaptive preconditioners improve the efficiency of vision and language model training, but using them independently on federated clients can compromise global optimization. Under heterogeneous data, clients develop different preconditioners, causing their transformed gradients to aggregate into a biased direction even when evaluated at the same model. Preconditioners therefore affect both local training efficiency and the consistency of global updates. Synchronizing them at the beginning of each round does not fully resolve this problem, as their states can diverge during local training.
We propose \texttt{FedMIX-P}, which mixes a shared reference with each client's adaptive preconditioner at every local step. This design retains local adaptation while controlling the differences between the operators applied to gradients. The mean-squared operator mismatch is exactly $\lambda^2$ times the local-state drift. For smooth nonconvex objectives with stochastic gradients and partial client participation, we establish stationarity bounds under bounded preconditioner drift. Suitable stepsizes and a horizon-dependent mixing weight yield an $O(R^{-1/2})$ bound on the average expected squared global-gradient norm, without requiring local preconditioners to converge to one another. A two-client example shows that a fixed positive mixing weight can preserve a nonstationary fixed point. The analysis covers bounded linear symmetric positive-definite preconditioners.
Experiments evaluate SOAP, Sophia, and Muon variants on image classification, vision-model fine-tuning, and federated language-model pretraining. \texttt{FedMIX-P} improves over the corresponding local optimizers across the evaluated settings, with accuracy gains of up to 19.47 percentage points and lower validation loss for language models ranging from 60M to 350M parameters. These results demonstrate the practical value of coordinating preconditioners in federated model training, while the full nonlinear and momentum-based updates require separate theoretical analysis.
\end{abstract}
\section{Introduction}
\label{sec:intro}
Effective local optimization is central to federated training of vision and language models. Each communication round incurs the cost of exchanging model parameters, so the progress obtained from local computation matters. Preconditioners offer a way to improve this progress by adapting gradient updates to parameter geometry. SOAP~\cite{vyas2025soap} combines matrix structure with adaptive scaling, Sophia~\cite{liu2024sophia} uses diagonal curvature estimates, and Muon~\cite{jordan6muon} orthogonalizes matrix update directions. These methods illustrate the value of adapting update geometry in model training and motivate bringing their benefits to federated learning (FL).

Much of federated optimization, however, has been developed around local SGD and the aggregation of its updates, following FedAvg~\cite{mcmahan2017communication}. This perspective emphasizes how heterogeneous data cause local models to drift apart. With adaptive preconditioning, another source of disagreement arises: clients can transform gradients differently even at the same model. A preconditioner therefore determines not only the effectiveness of a client's local steps, but also how those steps contribute to the global update. Understanding this interaction is necessary to turn the benefits of local preconditioning into better federated training.

A direct approach equips each client with a local preconditioned optimizer and averages the resulting model changes. We refer to this family as \texttt{FedPO}. Our experiments show that local adaptive optimizers can substantially outperform local SGD in federated language-model training, demonstrating the value of stronger local optimization. Yet their effectiveness can deteriorate under heterogeneous data. As optimizer states adapt to different client gradient statistics, the resulting transformations can bias the aggregated direction. This effect is distinct from local model drift and cannot be explained by differences in gradient-query locations alone.
The central question is whether the progress achieved by stronger local optimizers survives aggregation. Equation~\eqref{eq:intro_geometry_coupling} identifies one obstruction: client-specific preconditioners interact with gradient heterogeneity to alter the global direction, even before local trajectories separate.

To separate preconditioner mismatch from local model drift, consider one step at a shared model $x$. Let $g_i=\nabla f_i(x)$, $g=\frac{1}{N}\sum_i g_i$, and $\bar P=\frac{1}{N}\sum_i P_i$, where $P_i$ is client $i$'s linear preconditioner. The aggregated direction admits the decomposition
\begin{equation}
	\frac1N\sum_{i=1}^N P_i g_i
	=
	\bar P g+\mathcal C(x),
	\qquad
	\mathcal C(x)
	:=
	\frac1N\sum_{i=1}^N
	(P_i-\bar P)(g_i-g).
	\label{eq:intro_geometry_coupling}
\end{equation}
Here, $\mathcal C(x)$ captures the coupling between preconditioner mismatch and gradient heterogeneity. It measures the departure of the aggregated direction from the global gradient transformed by the average preconditioner. At a global stationary point, $g=0$, this coupling can still produce a nonzero update. Thus, differences in client geometry can bias aggregation even before local models diverge. Round-start synchronization removes the initial preconditioner mismatch, but subsequent adaptation to local data can recreate it within the same round.

We propose \texttt{FedMIX-P}, which mixes local and global preconditioners at every client step. At the beginning of round $r$, the server broadcasts a shared reference $P^r$, and participating clients initialize their local preconditioners from it. Client $i$ then applies
\begin{equation}
	H_i^{r,k}=(1-\lambda_r)P^r+\lambda_rP_i^{r,k},
	\qquad
	x_i^{r,k+1}
	=
	x_i^{r,k}-\eta_{l,r}H_i^{r,k}g_i^{r,k}.
	\label{eq:intro_mix}
\end{equation}
The server aggregates the returned model changes and terminal preconditioners to update the model and shared reference. The local component adapts to client data, while the shared component coordinates the geometry used throughout the round. Their mixture gives direct control over the applied operator mismatch:
$
	\|H_i^{r,k}-P^r\|_{\mathrm{op}}^2
	=
	\lambda_r^2
	\|P_i^{r,k}-P^r\|_{\mathrm{op}}^2.
	\label{eq:intro_effective_drift}
$
Thus, local preconditioners can retain distinct information while their influence on aggregation bias is controlled through the mixing weight.

For smooth nonconvex objectives, we analyze stochastic gradients and uniform partial client participation under bounded heterogeneity and bounded linear symmetric positive-definite preconditioners. The stationarity bound separates optimization error, sampling variation, local trajectory drift, and the contribution of preconditioner mismatch. The last contribution scales with $\lambda^2\bar D_P$, where $\bar D_P$ bounds local preconditioner drift. For an $R$-round run, choosing constant local stepsize $\eta_l=\Theta(1/(K\sqrt R))$, server stepsize $\gamma=K\eta_l$, and a horizon-dependent mixing weight $\lambda=\Theta(R^{-1/2})$ yields an $O(R^{-1/2})$ bound on the average expected squared global-gradient norm. This result does not require local preconditioners to converge to one another. The analysis also accounts for operators constructed from the current mini-batch, whose transformed gradient noise can be biased. A two-client example shows that fixed local preconditioner influence can preserve a nonstationary fixed point.

Experiments evaluate practical SOAP, Sophia, and Muon variants on image classification, pretrained vision-model fine-tuning, and federated language-model pretraining. The results show that coordinating local and shared geometry improves on the corresponding local optimizers, with particularly large gains under severe data heterogeneity. Together, the analysis and experiments support preconditioner coordination as a way to make stronger local optimization more effective in federated training. The theoretical guarantees cover the stated linear preconditioner model; the complete updates with nonlinear orthogonalization, clipping, or momentum require separate analysis.

Our contributions are as follows:
\begin{itemize}
	\item \textbf{A mechanism for biased aggregation.}
	We identify \emph{geometry--heterogeneity coupling}: even when clients
	evaluate gradients at the same model, differences in their
	preconditioners interact with differences in their data distributions
	to bias the aggregated direction. The exact decomposition
	in~\eqref{eq:intro_geometry_coupling} separates this effect from
	trajectory drift caused by multiple local steps.
	
	\item \textbf{Direct control of the applied geometry.}
	We propose \texttt{FedMIX-P}, which combines a shared preconditioner
	with each client's adaptive preconditioner at every local step.
	This design controls the mismatch between the \emph{operators
		actually used for updates} and the shared reference, even when
	local preconditioner states do not converge to one another.
	
	\item \textbf{A drift-aware convergence characterization.}
	For stochastic non-convex optimization with partial client
	participation, we derive stationarity bounds that account for
	preconditioners dependent on the current gradient sample.
	The \texttt{FedMIX-P} drift term scales with
	$\lambda^2\bar D_P$: a decaying mixing weight yields a vanishing
	bound under bounded state drift. We contrast this result with
	FedPO's drift-dependent bound and give a counterexample showing
	that fixed local mixing can converge to a nonstationary point.
\end{itemize}

\section{Related Work}
\label{sec:related}
\vspace{-2mm}

\textbf{Heterogeneity-aware federated optimization.}
FedAvg~\citep{mcmahan2017communication} combines multiple local
updates through model averaging, but heterogeneous client data can
cause local trajectories to deviate from the global descent
direction. FedProx~\citep{li2018federated} constrains local
optimization with a proximal term; SCAFFOLD~\citep{karimireddy2020scaffold}
uses server--client control variates to correct client drift;
and FedCM~\citep{xu2021fedcm} incorporates client-level momentum.
These approaches primarily regulate local model trajectories or
gradient-based update directions. When clients additionally apply
different adaptive preconditioners, an aggregation mismatch can
already arise at a shared model:
$\frac1N\sum_iP_i\nabla f_i(x)$ need not equal
$\bar P\nabla f(x)$. \texttt{FedMIX-P} addresses this operator--gradient
coupling by controlling the preconditioners applied during local
training.

\textbf{Structure-aware optimizers.}
Scalable alternatives to full Newton updates capture different
forms of parameter geometry. Shampoo~\citep{gupta2018shampoo}
maintains structured preconditioning matrices along tensor
dimensions. SOAP~\citep{vyas2025soap} combines Shampoo's evolving
eigenbasis with Adam-style updates in that basis.
Sophia~\citep{liu2024sophia} uses lightweight diagonal Hessian
estimates and clipping. Muon~\citep{jordan6muon} approximately
orthogonalizes matrix momentum using Newton--Schulz iterations;
its nonlinear update should not be identified with a fixed linear
positive-definite preconditioner. These methods motivate the use
of adaptive geometry on clients, while their distinct state and
update rules also delimit which implementations are covered by a
unified linear-preconditioner analysis.

\textbf{Preconditioning and matrix optimization in FL.}
Fed-Sophia~\citep{elbakary2024fed} brings Sophia-style curvature
information into federated optimization. FedPM~\citep{ishii2026fedpm}
explicitly studies local preconditioner drift and uses
\emph{preconditioned mixing of local parameters at the server};
its theoretical result focuses on a strongly convex, single-local-step
setting. Federated Muon methods~\citep{liu2025fedmuon,
	takezawa2025fedmuon} investigate matrix orthogonalization and
the bias of locally aggregated Muon directions. These studies
establish that client-specific update geometry is a substantive
federated optimization challenge.

\texttt{FedMIX-P} intervenes at a different point in the update: it mixes
the shared and local \emph{preconditioners at every client step},
$H_i^{r,k}=(1-\lambda_r)P^r+\lambda_rP_i^{r,k}$.
This yields the exact effective-drift relation
$\|H_i^{r,k}-P^r\|_{\mathrm{op}}^2
=\lambda_r^2\|P_i^{r,k}-P^r\|_{\mathrm{op}}^2$.
Our nonconvex analysis covers multiple local steps, stochastic
gradients, and partial participation, and obtains a vanishing
stationarity bound under bounded local-state drift by decreasing
$\lambda_r$. This guarantee is stated for linear positive-definite
operators; it does not automatically extend to Muon's nonlinear
orthogonalization or the complete SOAP and Sophia update rules.


\section{The Proposed Unified Algorithm}
\vspace{-2mm}
\label{sec:method}
\subsection{Problem Setup}
\vspace{-2mm}
FL seeks to learn a global model collaboratively over clients by minimizing the population risk:
\vspace{-2mm}
\begin{equation}
	f_i(\boldsymbol{x}) 
	:= \mathbb{E}_{\xi_i \sim \mathcal{D}_i}
	\bigl[F_i(\boldsymbol{x}; \xi_i)\bigr], \qquad f(\boldsymbol{x}) := \frac{1}{N} \sum_{i=1}^N f_i(\boldsymbol{x}).
	\label{eq:objective}
\end{equation}
The function $F_i$ is the loss function on client $i$. $\mathbb{E}_{\xi_i \sim \mathcal{D}_i}[\cdot]$ denotes conditional expectation with respect to  the sample $\xi_i$. $N$ is the number of clients, and 
$\boldsymbol{x}$ is global model.

\begin{algorithm}[tb]
	\small
	\caption{Federated Preconditioned Optimization (FedPO)}
	\label{algorithm_FedPO}
	\begin{algorithmic}[1]
		\Require Local stepsize $\eta_l$, communication rounds $R$,
		local steps $K$, number of clients $N$
		\State Initialize global model $\boldsymbol{x}^0$
		and local preconditioner states $\{P_i^{0,0}\}_{i=1}^{N}$
		\For{$r=0,\ldots,R-1$}
		\State Server samples $\mathcal S_r$ and broadcasts
		$\boldsymbol{x}^r$
		\For{each client $i\in\mathcal S_r$ in parallel}
		\State $\boldsymbol{x}_i^{r,0}\gets\boldsymbol{x}^r$;
		restore client $i$'s retained state $P_i^{r,0}$
		\For{$k=0,\ldots,K-1$}
		\State Sample $B_i^{r,k}$;
		$\boldsymbol{g}_i^{r,k}\gets
		\nabla F_i(\boldsymbol{x}_i^{r,k};B_i^{r,k})$
		\State $\tilde{\boldsymbol{g}}_i^{r,k}
		\gets P_i^{r,k}\boldsymbol{g}_i^{r,k}$
		\State $\boldsymbol{x}_i^{r,k+1}
		\gets\boldsymbol{x}_i^{r,k}
		-\eta_l\tilde{\boldsymbol{g}}_i^{r,k}$
		\State $P_i^{r,k+1}\gets
		\operatorname{UpdatePreconditioner}
		(P_i^{r,k},\boldsymbol{g}_i^{r,k})$
		\EndFor
		\State Retain $P_i^{r,K}$ locally
		\State Send $\Delta\boldsymbol{x}_i^r
		\gets\boldsymbol{x}_i^{r,K}-\boldsymbol{x}^r$
		\EndFor
		\State $\boldsymbol{x}^{r+1}\gets\boldsymbol{x}^r+
		\dfrac{1}{|\mathcal S_r|}
		\displaystyle\sum_{i\in\mathcal S_r}
		\Delta\boldsymbol{x}_i^r$
		\EndFor
	\end{algorithmic}
\end{algorithm}

\subsection{Federated Preconditioned Optimization}
\label{subsec:fedpo}
\vspace{-2mm}

We first formalize the direct use of client-side preconditioned
optimizers in federated learning. We call this baseline
\emph{Federated Preconditioned Optimization} (\texttt{FedPO});
its procedure is given in Algorithm~\ref{algorithm_FedPO}.
This terminology covers both linear preconditioners and more
general optimizer-specific gradient transformations, without
requiring every instance to be a second-order method.

At local step $k$ of round $r$, client $i$ computes
$
	\boldsymbol{g}_i^{r,k}
	=
	\nabla F_i(\boldsymbol{x}_i^{r,k};B_i^{r,k}),
$
where $B_i^{r,k}$ is a sampled mini-batch. The client maintains
an optimizer state $P_i^{r,k}$ and applies the corresponding
gradient transformation:
\begin{equation}
	\label{eq:local-preconditioned-update}
	\boldsymbol{x}_i^{r,k+1}
	=
	\boldsymbol{x}_i^{r,k}
	-
	\eta_l P_i^{r,k}
	\bigl(\boldsymbol{g}_i^{r,k}\bigr).
\end{equation}
The applied state $P_i^{r,k}$ is the state available at the
beginning of step $k$. After the model update, the client
updates it for the next step:
\begin{equation}
	P_i^{r,k+1}
	=
	\operatorname{UpdatePreconditioner}
	\bigl(P_i^{r,k},\boldsymbol{g}_i^{r,k}\bigr).
	\label{eq:fedpo-state-update}
\end{equation}
For example, SOAP uses matrix-structured statistics, Sophia
uses curvature-informed coordinate-wise scaling, and Muon
orthogonalizes matrix update directions. These transformations
need not all be linear; the convergence analysis below
explicitly identifies the symmetric positive-definite linear
subclass to which its guarantees apply.

After $K$ steps, client $i$ sends only its model difference
$
	\Delta\boldsymbol{x}_i^r
	=
	\boldsymbol{x}_i^{r,K}-\boldsymbol{x}^{r}
$
to the server. FedPO aggregates these differences as
$
	\boldsymbol{x}^{r+1}
	=
	\boldsymbol{x}^{r}
	+
	\frac{1}{|\mathcal S_r|}
	\sum_{i\in\mathcal S_r}
	\Delta\boldsymbol{x}_i^r.
	\label{eq:fedpo-aggregation}
$
Each client retains its optimizer state for subsequent
participation; FedPO does not synchronize preconditioners.
The resulting global direction therefore combines gradients
transformed by potentially different client geometries.
The next subsection isolates the bias caused by this mismatch.

\subsection{Preconditioner Drift and Aggregation Bias}
\label{subsec:preconditioner-drift}

To separate geometric mismatch from the usual divergence of
local model trajectories, first consider a single update at
a shared model $\boldsymbol{x}$. Let
$\boldsymbol{g}_i=\nabla f_i(\boldsymbol{x})$,
$\boldsymbol{g}=\nabla f(\boldsymbol{x})$, and suppose that
the applied preconditioners are linear operators
$P_1,\ldots,P_N$. With
$\bar P=\frac{1}{N}\sum_{i=1}^{N}P_i$, their aggregated direction
admits the exact decomposition
\begin{equation}
	\label{eq:geometry-heterogeneity-coupling}
	\frac{1}{N}\sum_{i=1}^{N}P_i\boldsymbol{g}_i
	=
	\bar P\boldsymbol{g}
	+
	\frac{1}{N}\sum_{i=1}^{N}
	(P_i-\bar P)(\boldsymbol{g}_i-\boldsymbol{g}).
\end{equation}
The second term couples preconditioner mismatch with gradient
heterogeneity. It need not vanish when
$\nabla f(\boldsymbol{x})=\boldsymbol{0}$: independently
preconditioned client gradients may still produce a nonzero
aggregate direction at a stationary point of the global
objective. Multiple local steps introduce an additional
trajectory error because the gradients are then evaluated
at different client models.

In \texttt{FedMIX-P}, the server broadcasts a shared reference
$P^r$, and each participating client starts the round from
$P_i^{r,0}=P^r$. We define the round-wise drift of the
\emph{local preconditioner states} relative to this reference
as
\begin{equation}
	\label{eq:state-preconditioner-drift}
	D_{\mathrm{state}}^r
	:=
	\frac{1}{|\mathcal S_r|K}
	\sum_{i\in\mathcal S_r}
	\sum_{k=0}^{K-1}
	\bigl\|P_i^{r,k}-P^r\bigr\|_{\mathrm{op}}^2.
\end{equation}
The index range $k=0,\ldots,K-1$ covers exactly the states
applied to gradients. The uploaded terminal state
$P_i^{r,K}$ is used to construct the next shared reference;
it is not applied during round $r$. If a drift assumption
instead sums over $k=1,\ldots,K$, round-start alignment
$P_i^{r,0}=P^r$ implies that it also upper-bounds
\eqref{eq:state-preconditioner-drift}; the index-transfer
identity is given in the appendix.

\texttt{FedMIX-P} applies the mixed operator
\begin{equation}
	H_i^{r,k}
	=
	(1-\lambda_r)P^r+\lambda_rP_i^{r,k},
	\qquad 0\leq\lambda_r\leq1.
	\label{eq:effective-preconditioner}
\end{equation}
Accordingly, its \emph{effective} drift is
\begin{equation}
	\label{eq:effective-preconditioner-drift}
	D_{\mathrm{eff}}^r
	:=
	\frac{1}{|\mathcal S_r|K}
	\sum_{i\in\mathcal S_r}
	\sum_{k=0}^{K-1}
	\bigl\|H_i^{r,k}-P^r\bigr\|_{\mathrm{op}}^2
	=
	\lambda_r^2D_{\mathrm{state}}^r.
\end{equation}
This equality captures the mechanism of \texttt{FedMIX-P}. Resetting
states at the start of a round removes the initial mismatch,
but local states can separate again as clients process
heterogeneous data. Stepwise mixing limits the mismatch of
the operators that actually transform gradients. In
particular, a bounded $D_{\mathrm{state}}^r$ does not have to
vanish for $D_{\mathrm{eff}}^r$ to vanish as
$\lambda_r\rightarrow0$.


\begin{algorithm}[tb]
	\small
	\caption{\texttt{FedMIX-P}: Federated Mixing of Local and Global Preconditioners}
	\label{algorithm_FedPACT}
	\begin{algorithmic}[1]
		\Require Local stepsizes $\{\eta_{l,r}\}_{r=0}^{R-1}$,
		server stepsizes $\{\gamma_r\}_{r=0}^{R-1}$,
		local-preconditioner weights $\{\lambda_r\}_{r=0}^{R-1}$,
		rounds $R$, local steps $K$, number of clients $N$
		\State Initialize global model $\boldsymbol{x}^0$
		and shared preconditioner $P^0$
		\For{$r=0,\dots,R-1$}
		\State Server samples clients $\mathcal{S}_r$
		and broadcasts $(\boldsymbol{x}^r,P^r)$
		\For{each client $i\in\mathcal{S}_r$ in parallel}
		\State $\boldsymbol{x}_i^{r,0}\gets\boldsymbol{x}^r$;
		\fcolorbox{LightRed}{LightRed}
		{$P_i^{r,0}\gets P^r$}
		\For{$k=0,\dots,K-1$}
		\State Sample mini-batch $B_i^{r,k}$;
		$\boldsymbol{g}_i^{r,k}
		\gets\nabla F_i(\boldsymbol{x}_i^{r,k};B_i^{r,k})$
		\State \fcolorbox{LightBlue}{LightBlue}
		{$H_i^{r,k}\gets
			(1-\lambda_r)P^r+\lambda_r P_i^{r,k}$}
		\State \fcolorbox{LightBlue}{LightBlue}
		{$\boldsymbol{x}_i^{r,k+1}
			\gets\boldsymbol{x}_i^{r,k}
			-\eta_{l,r}H_i^{r,k}
			\boldsymbol{g}_i^{r,k}$}
		\State $P_i^{r,k+1}\gets
		\operatorname{Update\_Preconditioner}
		(P_i^{r,k},\boldsymbol{g}_i^{r,k})$
		\EndFor
		\State Send
		$\bigl(\Delta\boldsymbol{x}_i^r,
		P_i^{r,K}\bigr)$, where
		$\Delta\boldsymbol{x}_i^r
		\gets\boldsymbol{x}_i^{r,K}-\boldsymbol{x}^r$
		\EndFor
		\State $\boldsymbol{d}^{r+1}
		\gets-\dfrac{1}{|\mathcal{S}_r|K\eta_{l,r}}
		\displaystyle\sum_{i\in\mathcal{S}_r}
		\Delta\boldsymbol{x}_i^r$
		\State $\boldsymbol{x}^{r+1}
		\gets\boldsymbol{x}^r-\gamma_r\boldsymbol{d}^{r+1}$
		\State \fcolorbox{LightRed}{LightRed}
		{$P^{r+1}\gets
			\dfrac{1}{|\mathcal{S}_r|}
			\displaystyle\sum_{i\in\mathcal{S}_r}P_i^{r,K}$}
		\EndFor
	\end{algorithmic}
\end{algorithm}

\section{Federated Mixing of Local and Global Preconditioners}
\label{sec:fedmixp}
\vspace{-2mm}

Equation~\eqref{eq:geometry-heterogeneity-coupling} shows that
heterogeneous client preconditioners can introduce a systematic
aggregation bias even when all clients evaluate gradients at
the same model. Synchronizing optimizer states only at the
beginning of a round removes the initial mismatch, but the
states may diverge again during local training. This suggests
a more direct design principle: control the geometry
\emph{applied to every local gradient}, rather than relying
only on the geometry used to initialize a round.

We propose \texttt{FedMIX-P} (\textbf{Fed}erated
\textbf{Mix}ing of Local and Global \textbf{P}reconditioners),
shown in Algorithm~\ref{algorithm_FedPACT}. It combines a
shared preconditioner reference with locally updated
preconditioners through two coupled operations.

\textbf{Shared reference and round-start alignment.}
The server maintains a shared preconditioner $P^r$ and
broadcasts $(\boldsymbol{x}^r,P^r)$ to the selected clients.
Each client initializes its local model and preconditioner
state as
\begin{equation}
	\boldsymbol{x}_i^{r,0}=\boldsymbol{x}^r,
	\qquad
	P_i^{r,0}=P^r.
	\label{eq:fedmix-initialization}
\end{equation}
After $K$ local steps, client $i$ uploads its model
difference $\Delta\boldsymbol{x}_i^r
=\boldsymbol{x}_i^{r,K}-\boldsymbol{x}^r$ and terminal
preconditioner state $P_i^{r,K}$. The server forms the next
reference by averaging the uploaded states:
\begin{equation}
	P^{r+1}
	=
	\frac{1}{|\mathcal S_r|}
	\sum_{i\in\mathcal S_r}P_i^{r,K}.
	\label{eq:fedmix-reference-update}
\end{equation}
Thus every participating client starts a round from the
same reference, while its state can still adapt to local
data within the round.

\textbf{Stepwise geometry mixing.}
At local step $k$, client $i$ constructs the operator used
for its current gradient as
\begin{equation}
	H_i^{r,k}
	=
	(1-\lambda_r)P^r+\lambda_rP_i^{r,k},
	\qquad 0\leq\lambda_r\leq1,
	\label{eq:fedmix-operator}
\end{equation}
and updates
\begin{equation}
	\boldsymbol{x}_i^{r,k+1}
	=
	\boldsymbol{x}_i^{r,k}
	-
	\eta_{l,r}H_i^{r,k}\boldsymbol{g}_i^{r,k}.
	\label{eq:fedmix-local-update}
\end{equation}
The current gradient subsequently updates the local
preconditioner state for step $k+1$. Hence the operator
$P_i^{r,k}$ in~\eqref{eq:fedmix-local-update} is available
before the current mini-batch is sampled. This ordering
also makes the stochastic direction conditionally unbiased
with respect to that mini-batch in the predictable-operator
case analyzed in the appendix.

The mixing weight has a precise geometric meaning:
\begin{equation}
	H_i^{r,k}-P^r
	=
	\lambda_r(P_i^{r,k}-P^r).
	\label{eq:fedmix-contraction-identity}
\end{equation}
Consequently, if $D_{\mathrm{state}}^r$ denotes the
mean-squared drift of the local states applied within
round $r$, then the drift of the operators that actually
transform gradients is exactly
\begin{equation}
	D_{\mathrm{eff}}^r
	=
	\lambda_r^2D_{\mathrm{state}}^r.
	\label{eq:fedmix-effective-drift}
\end{equation}
Round-start alignment sets the initial mismatch to zero;
stepwise mixing controls its reappearance. Importantly,
\eqref{eq:fedmix-effective-drift} does not require local
optimizer states themselves to converge to a common state.
It allows substantial local adaptation early in training
and a progressively shared update geometry as $\lambda_r$
decreases.

\textbf{Server update and theoretical implication.}
The server recovers the normalized average local direction
from the returned model differences:
\begin{equation}
	\boldsymbol d^{r+1}
	=
	-
	\frac{1}{|\mathcal S_r|K\eta_{l,r}}
	\sum_{i\in\mathcal S_r}\Delta\boldsymbol{x}_i^r,
	\qquad
	\boldsymbol{x}^{r+1}
	=
	\boldsymbol{x}^r-\gamma_r\boldsymbol d^{r+1}.
	\label{eq:fedmix-server-update}
\end{equation}
Setting $\gamma_r=K\eta_{l,r}$ recovers direct averaging
of the returned client models. Under this setting, the
nonconvex analysis in Appendix~\ref{app:mix} shows that
the bias associated with heterogeneous preconditioning
is controlled by $\lambda_r^2\bar D_P$ together with data
heterogeneity. In particular, with bounded local-state
drift, $\eta_{l,r}=\Theta(1/(K\sqrt R))$ and
$\lambda_r=\Theta(R^{-1/2})$ for an $R$-round run,
\texttt{FedMIX-P} attains an $O(R^{-1/2})$ bound on the average
expected squared global-gradient norm. A fixed nonzero
mixing weight does not generally give this guarantee:
the appendix provides an instance with a nonstationary
fixed point.

\textbf{Instantiations and scope.}
The algorithmic template can mix gradient transformations
derived from different local optimizers. Our theoretical
guarantee applies when the \emph{applied} transformations
are linear, symmetric positive-definite preconditioners
with the stated spectral bounds. Sophia and SOAP
instantiations require their induced operators and state
aggregation to be specified explicitly. Muon's
orthogonalization is nonlinear, so a Muon-based variant
requires a separate analysis and is not covered by the
linear-preconditioner theorem.

\section{Theoretical Analysis}
\label{convergence_analysis}
\vspace{-2mm}

We analyze \texttt{FedPO} and \texttt{FedMIX-P} under
nonconvex objectives, stochastic local gradients, and
uniform partial client participation. The analysis
distinguishes the drift of local preconditioner states
from the drift of the operators actually applied to
gradients. Our guarantees concern linear, symmetric
positive-definite preconditioners; extensions to the
full nonlinear or momentum-based optimizer implementations
require separate arguments.

Let
$f(\boldsymbol{x})=\frac{1}{N}\sum_{i=1}^{N}f_i(\boldsymbol{x})$.
In every round, the server samples a size-$S$ subset
$\mathcal S_r$ uniformly without replacement. The following
assumptions state the conditions used in our analysis.

\begin{assumption}[Smoothness and lower boundedness]
	\label{smoothness}
	Each $f_i$ is differentiable and $L$-smooth:
	$
	\|\nabla f_i(\boldsymbol{x})-\nabla f_i(\boldsymbol{y})\|
	\le L\|\boldsymbol{x}-\boldsymbol{y}\|
	\text{for all }\boldsymbol{x},\boldsymbol{y}.
	$ $f(\boldsymbol{x})\ge f_*>-\infty$, and
	$\Delta:=f(\boldsymbol{x}^0)-f_*<\infty$.
\end{assumption}

\begin{assumption}[Conditional stochastic-gradient moments]
	\label{bounded_stochastic_gradient_I}
	Write
	$\boldsymbol{g}_i^{r,k}
	=\nabla f_i(\boldsymbol{x}_i^{r,k})
	+\boldsymbol{\varepsilon}_i^{r,k}$.
	Conditional on all information available before the current
	mini-batch is drawn,
$
	\mathbb E[
	\boldsymbol{\varepsilon}_i^{r,k}
	\mid\mathcal H_{r,i,k}]
	=\boldsymbol{0},
	\mathbb E[
	\|\boldsymbol{\varepsilon}_i^{r,k}\|^2
	\mid\mathcal H_{r,i,k}]
	\le\sigma^2.
$
	The sampled client set is independent of the current
	mini-batches.
\end{assumption}

\begin{assumption}[Bounded  heterogeneity]
	\label{bounded_heterogeneity}
	For every $\boldsymbol{x}$,
$
	\frac1N\sum_{i=1}^{N}
	\|\nabla f_i(\boldsymbol{x})
	-\nabla f(\boldsymbol{x})\|^2
	\le\sigma_g^2.
$
\end{assumption}

\begin{assumption}[Bounded positive-definite preconditioners]
	\label{bounded_preconditioner}
	The applied local preconditioners and the shared reference
	are symmetric linear operators satisfying, almost surely,
	\[
	mI\preceq P_i^{r,k}\preceq MI,
	\qquad
	mI\preceq P^r\preceq MI,
	\qquad 0<m\le M<\infty.
	\]
	The shared reference $P^r$ is determined before round-$r$
	client sampling and local mini-batches. For \texttt{FedPO},
	$P^r$ is an analysis reference and is not communicated.
\end{assumption}

\begin{assumption}[Bounded local preconditioner drift]
	\label{bounded_preconditioner_drift}
	There is a deterministic constant $\bar D_P<\infty$
	such that, almost surely, for every round $r$,
$
	\frac1{SK}\sum_{i\in\mathcal S_r}
	\sum_{k=1}^{K}
	\|P_i^{r,k}-P^r\|_{\mathrm{op}}^2
	\le\bar D_P.
	$
\end{assumption}

\textbf{Notation.}
Let
$
u=K\eta_l,
\Delta=f(\boldsymbol{x}^0)-f_*,
\mathcal G_R
=\frac1R\sum_{r=0}^{R-1}
\mathbb E\|\nabla f(\boldsymbol{x}^r)\|^2.
$
Define
$
\rho_S=
\begin{cases}
	\dfrac{N-S}{S(N-1)},&N>1,\\[3pt]
	0,&N=1,
\end{cases}
V=M^2\left(
\frac{\sigma^2}{SK}+\rho_S\sigma_g^2
\right),
B=\sigma_g^2+\sigma^2,
B_\chi=\sigma_g^2+\chi\sigma^2.
$
Here $\chi=0$ for predictable applied preconditioners
and $\chi=1$ when they may depend on the current mini-batch.
The notation $\lesssim$ hides only universal numerical
constants. Set
\[
D_{\mathrm{PO}}
:=
\bar D_P+\frac{(M-m)^2}{K}.
\]
The second term accounts for FedPO's retained,
potentially unaligned preconditioner at the beginning
of a round.

\begin{theorem}[Convergence bound for FedPO]
	\label{thm:main:fedpo}
	Under Assumptions~\ref{smoothness}--%
	\ref{bounded_preconditioner_drift}, consider FedPO
	with uniform size-$S$ client sampling and server model
	averaging. If $u>0$ is sufficiently small to satisfy
	the local-trajectory and descent conditions specified
	in Appendix~\ref{app:mix}, then
	\begin{equation}
		\label{eq:main:fedpo_bound}
		\mathcal G_R
		\lesssim
		\frac{\Delta}{muR}
		+\frac{Lu}{m}V
		+\frac{L^2M^4u^2}{m^2}B
		+\fcolorbox{red}{white}{$\displaystyle
			\frac{D_{\mathrm{PO}}}{m^2}B_\chi$}.
	\end{equation}
	The stepsize threshold depends on $L,m,M$, but does
	not require $D_{\mathrm{PO}}$ to be small.
\end{theorem}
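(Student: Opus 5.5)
We follow the standard one-round descent argument for local methods, with an extra decomposition that isolates the mismatch between the applied operators and the reference $P^r$. For FedPO, $\gamma=K\eta_l$, so $\boldsymbol{x}^{r+1}=\boldsymbol{x}^r-u\,\boldsymbol d^{r+1}$ with
\[
\boldsymbol d^{r+1}=\frac{1}{SK}\sum_{i\in\mathcal S_r}\sum_{k=0}^{K-1}P_i^{r,k}\boldsymbol g_i^{r,k}.
\]
By $L$-smoothness of $f$,
\[
\mathbb E f(\boldsymbol{x}^{r+1})\le f(\boldsymbol{x}^r)-u\,\mathbb E\langle\nabla f(\boldsymbol{x}^r),\boldsymbol d^{r+1}\rangle+\tfrac{Lu^2}{2}\mathbb E\|\boldsymbol d^{r+1}\|^2 .
\]
We split each summand as
\begin{align*}
P_i^{r,k}\boldsymbol g_i^{r,k}
&= P^r\nabla f(\boldsymbol{x}^r)
 + P^r\bigl(\nabla f_i(\boldsymbol{x}^r)-\nabla f(\boldsymbol{x}^r)\bigr)
 + P^r\bigl(\nabla f_i(\boldsymbol{x}_i^{r,k})-\nabla f_i(\boldsymbol{x}^r)\bigr)\\
&\quad + P^r\boldsymbol\varepsilon_i^{r,k}
 + (P_i^{r,k}-P^r)\boldsymbol g_i^{r,k}.
\end{align*}
Since $P^r$ is determined before sampling (Assumption~\ref{bounded_preconditioner}), the second term averages to zero over uniform client sampling. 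The fourth term has zero conditional mean by Assumption~\ref{bounded_stochastic_gradient_I}. The first term gives the descent $\langle\nabla f,P^r\nabla f\rangle\ge m\|\nabla f(\boldsymbol{x}^r)\|^2$.

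The remaining cross terms are handled with Young's inequality, $\langle a,b\rangle\le \tfrac m4\|a\|^2+\tfrac1m\|b\|^2$, which explains the $1/m$ and $1/m^2$ factors after dividing the final inequality by $mu$.

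\textbf{Trajectory term.} Its squared norm is at most $M^2L^2\|\boldsymbol{x}_i^{r,k}-\boldsymbol{x}^r\|^2$. We bound the local drift by unrolling $\boldsymbol{x}_i^{r,k}-\boldsymbol{x}^r=-\eta_l\sum_{j<k}P_i^{r,j}\boldsymbol g_i^{r,j}$. Using $\|P\|\le M$ and a Gronwall-type recursion valid under the condition $uLM\lesssim 1$, this gives
\[
\mathbb E\|\boldsymbol{x}_i^{r,k}-\boldsymbol{x}^r\|^2\lesssim u^2M^2\bigl(\|\nabla f(\boldsymbol{x}^r)\|^2+\sigma_g^2+\sigma^2\bigr).
\]
Multiplying by $L^2M^2/m$ and dividing by $mu$ yields $L^2M^4u^2B/m^2$. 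The $\|\nabla f\|^2$ part is absorbed into the descent term under the stepsize condition.

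\textbf{Quadratic term.} We compute $\mathbb E\|\boldsymbol d^{r+1}\|^2$ by separating mean and variance. Mini-batch noise, averaged over $SK$ nearly independent martingale increments, contributes $M^2\sigma^2/(SK)$. Client sampling without replacement contributes the finite-population factor $\rho_S$ times the heterogeneity $M^2\sigma_g^2$. Together these give $LuV/m$ after division. The mean part is absorbed using $u\lesssim m/(LM^2)$.

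\textbf{Mismatch term.} This is the main obstacle. We bound
\[
\Bigl\langle\nabla f,\tfrac1{SK}\textstyle\sum (P_i^{r,k}-P^r)\boldsymbol g_i^{r,k}\Bigr\rangle
\le \tfrac m4\|\nabla f\|^2+\tfrac1m\,\tfrac1{SK}\textstyle\sum\|P_i^{r,k}-P^r\|_{\mathrm{op}}^2\,\|\boldsymbol g_i^{r,k}\|^2 .
\]
We then need $\|\boldsymbol g_i^{r,k}\|^2$ to behave like $\sigma_g^2+\chi\sigma^2$ plus gradient and drift terms that can be absorbed.

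First, replace $\boldsymbol g_i^{r,k}$ by $\nabla f_i(\boldsymbol{x}^r)$ plus a trajectory error plus noise. For the gradient part, $\|P_i-P^r\|_{\mathrm{op}}^2\le (M-m)^2$, so its $\|\nabla f\|^2$ contribution can be absorbed when $m$ dominates. If it cannot, we would instead center the operator mismatch against $\nabla f_i-\nabla f$ as in \eqref{eq:geometry-heterogeneity-coupling}, so that only $\sigma_g^2$ appears.

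For the noise part, the two cases differ. When the operators are predictable ($\chi=0$), $P_i^{r,k}$ is $\mathcal H_{r,i,k}$-measurable, so the cross term $\langle\nabla f,(P_i^{r,k}-P^r)\boldsymbol\varepsilon_i^{r,k}\rangle$ has zero mean and no $\sigma^2$ enters. When they may depend on the current mini-batch ($\chi=1$), we only have the Cauchy–Schwarz bound with $\sigma^2$. This gives $D\,B_\chi/m^2$ after dividing by $mu$.

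The drift average uses Assumption~\ref{bounded_preconditioner_drift}. Its index set $k=1,\dots,K$ must be transferred to $k=0,\dots,K-1$. The $k=0$ term is not aligned in FedPO, because the retained state $P_i^{r,0}$ is not reset to $P^r$. We bound that term by $(M-m)^2$, which adds $(M-m)^2/K$ and produces $D_{\mathrm{PO}}$.

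Finally, we sum the per-round inequality over $r$, telescope $f$ using $\Delta$, and divide by $muR$. This gives \eqref{eq:main:fedpo_bound}. The absorption steps only require $u\lesssim \min\{1/(LM),\,m/(LM^2)\}$, independent of $D_{\mathrm{PO}}$. The delicate points are the measurability bookkeeping for $\chi$ and the non-replacement sampling factor $\rho_S$.
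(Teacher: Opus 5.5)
Your skeleton matches the paper's. It has the descent inequality, the $O(u^2M^2)$ local-displacement bound under $uLM\lesssim1$, and the martingale and finite-population variance terms giving $V$. It also has the index transfer that adds $(M-m)^2/K$ because FedPO does not reset $P_i^{r,0}$. The gap is at the step that makes the theorem's last sentence true.

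In your main route, the mismatch term contains $\frac1{SK}\sum_{i,k}(P_i^{r,k}-P^r)\nabla f(\boldsymbol x^r)$. After Young's inequality it contributes a multiple of $D_{\mathrm{PO}}\|\nabla f(\boldsymbol x^r)\|^2/m$. Absorbing that into $m\|\nabla f(\boldsymbol x^r)\|^2$ requires $D_{\mathrm{PO}}\lesssim m^2$, or $(M-m)\lesssim m$ with your pointwise bound. That is exactly the smallness condition the statement says is not needed. Your closing claim that the thresholds are independent of $D_{\mathrm{PO}}$ therefore does not follow from the main route.

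Your fallback, centering against $\nabla f_i-\nabla f$, stops short of the needed step. Centering leaves $(\overline P^r-P^r)\nabla f(\boldsymbol x^r)$, with $\overline P^r:=\frac1{SK}\sum_{i\in\mathcal S_r}\sum_{k=0}^{K-1}P_i^{r,k}$, and you never say where this term goes. The paper folds it into the descent term:
\[
P^r\nabla f(\boldsymbol x^r)+\frac1{SK}\sum_{i\in\mathcal S_r}\sum_{k=0}^{K-1}(P_i^{r,k}-P^r)\nabla f_i(\boldsymbol x^r)
=\overline P^r\nabla f(\boldsymbol x^r)+Z^r,
\]
where
\[
Z^r:=\frac1{SK}\sum_{i\in\mathcal S_r}\sum_{k=0}^{K-1}(P_i^{r,k}-P^r)\bigl(\nabla f_i(\boldsymbol x^r)-\nabla f(\boldsymbol x^r)\bigr).
\]
$\overline P^r$ depends on the current samples and is not pre-round measurable, so your justification ``$P^r$ is determined before sampling'' does not apply to it. What you need instead is that $mI\preceq\overline P^r\preceq MI$ holds pathwise. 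That gives $\langle\nabla f(\boldsymbol x^r),\overline P^r\nabla f(\boldsymbol x^r)\rangle\ge m\|\nabla f(\boldsymbol x^r)\|^2$ and $\|\overline P^r\nabla f(\boldsymbol x^r)\|^2\le M^2\|\nabla f(\boldsymbol x^r)\|^2$ without any expectation argument. Meanwhile $\mathbb E\|Z^r\|^2\le D_{\mathrm{PO}}\sigma_g^2$ carries no gradient-norm term, so only $u\lesssim m/(LM^2)$ remains.

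Keep the client-sampling error multiplied by the pre-round $P^r$, that is, $P^r[\frac1S\sum_{i\in\mathcal S_r}\nabla f_i(\boldsymbol x^r)-\nabla f(\boldsymbol x^r)]$. If you apply the coupling identity literally, with the sample mean and $\overline P^r$, the random $\overline P^r$ multiplies that error and its conditional mean is no longer zero.

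A smaller but necessary correction concerns your mismatch display. The Jensen-form bound $\frac1{SK}\sum\|P_i^{r,k}-P^r\|_{\mathrm{op}}^2\|\boldsymbol g_i^{r,k}\|^2$ cannot be turned into $D_{\mathrm{PO}}$. Assumption~\ref{bounded_preconditioner_drift} controls only the average drift, not individual terms. For $\chi=1$ you also cannot factor the expectation of the product. At best you would get $(M-m)^2$ in place of $D_{\mathrm{PO}}$. Use the pathwise product form of Cauchy--Schwarz instead:
\[
\Bigl\|\frac1{SK}\sum_{i,k}(P_i^{r,k}-P^r)v_{ik}\Bigr\|^2\le\Bigl(\frac1{SK}\sum_{i,k}\|P_i^{r,k}-P^r\|_{\mathrm{op}}^2\Bigr)\Bigl(\frac1{SK}\sum_{i,k}\|v_{ik}\|^2\Bigr)\le D_{\mathrm{PO}}\,\frac1{SK}\sum_{i,k}\|v_{ik}\|^2.
\]
Take $v_{ik}=\nabla f_i(\boldsymbol x^r)-\nabla f(\boldsymbol x^r)$, and additionally $v_{ik}=\boldsymbol\varepsilon_i^{r,k}$ when $\chi=1$. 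Because $D_{\mathrm{PO}}$ is deterministic, taking expectations gives $D_{\mathrm{PO}}\sigma_g^2$ and $D_{\mathrm{PO}}\sigma^2$. This produces the $D_{\mathrm{PO}}B_\chi/m^2$ term.
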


\begin{theorem}[Convergence bound for \texttt{FedMIX-P}]
	\label{thm:main:fedmixp}
	Under Assumptions~\ref{smoothness}--%
	\ref{bounded_preconditioner_drift}, consider \texttt{FedMIX-P}
	with
	\[
	H_i^{r,k}
	=(1-\lambda)P^r+\lambda P_i^{r,k},
	\qquad \lambda\in[0,1],
	\]
	where $\lambda$ is fixed within an $R$-round run.
	Set the server stepsize to $\gamma=u=K\eta_l$.
	If $u>0$ and $\lambda^2\bar D_P$ are sufficiently small
	to satisfy the conditions in Appendix~\ref{app:mix},
	then
	\begin{equation}
		\label{eq:main:fedmixp_bound}
		\mathcal G_R
		\lesssim
		\frac{\Delta}{muR}
		+\frac{Lu}{m}V
		+\frac{L^2M^4u^2}{m^2}B
		+\fcolorbox{red}{white}{$\displaystyle
			\frac{\lambda^2\bar D_P}{m^2}B_\chi$}.
	\end{equation}
	In particular, for fixed problem parameters, choosing
	$u=\Theta(R^{-1/2})$ and
	$\lambda=\Theta(R^{-1/2})$ yields
	$\mathcal G_R=O(R^{-1/2})$, although
	$\bar D_P$ need only be bounded.
\end{theorem}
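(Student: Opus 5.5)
The plan is a standard one-round descent argument for the server iterate. The preconditioner mismatch is isolated as a separate bias term, and that term is controlled through the identity $H_i^{r,k}-P^r=\lambda(P_i^{r,k}-P^r)$ from \eqref{eq:fedmix-contraction-identity}.

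With $\gamma=u=K\eta_l$, the server update reads
\[
\boldsymbol{x}^{r+1}=\boldsymbol{x}^r-u\,\boldsymbol d^{r+1},\qquad
\boldsymbol d^{r+1}=\frac{1}{SK}\sum_{i\in\mathcal S_r}\sum_{k=0}^{K-1}H_i^{r,k}\boldsymbol g_i^{r,k}.
\]
$L$-smoothness of $f$ then gives
\[
\mathbb E f(\boldsymbol{x}^{r+1})\le \mathbb E f(\boldsymbol{x}^r)-u\,\mathbb E\langle\nabla f(\boldsymbol{x}^r),\boldsymbol d^{r+1}\rangle+\tfrac{Lu^2}{2}\mathbb E\|\boldsymbol d^{r+1}\|^2 .
\]
I will split each summand of $\boldsymbol d^{r+1}$ into four parts:
\[
H_i^{r,k}\boldsymbol g_i^{r,k}=P^r\nabla f_i(\boldsymbol{x}^r)+P^r\bigl(\nabla f_i(\boldsymbol{x}_i^{r,k})-\nabla f_i(\boldsymbol{x}^r)\bigr)+P^r\boldsymbol\varepsilon_i^{r,k}+\lambda(P_i^{r,k}-P^r)\boldsymbol g_i^{r,k}.
\]

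\textbf{Handling the first three parts.} Since $P^r$ is fixed before sampling (Assumption~\ref{bounded_preconditioner}), averaging over uniform sampling makes the first part equal to $P^r\nabla f(\boldsymbol{x}^r)$ in expectation. This yields $-u\|\nabla f(\boldsymbol{x}^r)\|_{P^r}^2\le -um\|\nabla f(\boldsymbol{x}^r)\|^2$. The noise part has zero conditional mean by Assumption~\ref{bounded_stochastic_gradient_I}. In the second-moment term, noise and sampling variance together contribute $M^2(\sigma^2/(SK)+\rho_S\sigma_g^2)=V$, using the standard without-replacement variance formula with $\rho_S$. The trajectory part is handled by Young's inequality, $\langle a,P^r b\rangle\le \frac m4\|a\|^2+\frac{M^2}{m}\|b\|^2$. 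It is then bounded through $L^2\|\boldsymbol{x}_i^{r,k}-\boldsymbol{x}^r\|^2$. For this I need a local drift lemma, obtained by unrolling $\boldsymbol{x}_i^{r,k}-\boldsymbol{x}^r=-\eta_l\sum_{j<k}H_i^{r,j}\boldsymbol g_i^{r,j}$ with $\|H_i^{r,j}\|_{\mathrm{op}}\le M$ (a convex combination of operators in $[mI,MI]$). It should take the form
\[
\frac1K\sum_k\mathbb E\|\boldsymbol{x}_i^{r,k}-\boldsymbol{x}^r\|^2\lesssim u^2M^2\bigl(B+\|\nabla f(\boldsymbol{x}^r)\|^2\bigr),
\]
valid when $uLM\lesssim 1$. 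After multiplying by $M^2L^2/m$, this produces the $\frac{L^2M^4u^2}{m^2}B$ term once the whole recursion is divided by $um$. The leftover $\|\nabla f\|^2$ piece is absorbed by the small-$u$ condition.

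\textbf{The preconditioner bias term (main obstacle).} The remaining part is $\lambda(P_i^{r,k}-P^r)\boldsymbol g_i^{r,k}$. I first apply Young's inequality to it:
\[
u\bigl|\langle\nabla f,\lambda(P_i^{r,k}-P^r)\boldsymbol g_i^{r,k}\rangle\bigr|\le \tfrac{um}{4}\|\nabla f\|^2+\tfrac{u\lambda^2}{m}\|P_i^{r,k}-P^r\|_{\mathrm{op}}^2\|\boldsymbol g_i^{r,k}\|^2 .
\]
The difficulty is that $\|\boldsymbol g_i^{r,k}\|^2$ is correlated with the drift, so the product does not factor. I will split $\boldsymbol g_i^{r,k}$ as
\[
\boldsymbol g_i^{r,k}=\bigl(\nabla f_i(\boldsymbol{x}^r)-\nabla f(\boldsymbol{x}^r)\bigr)+\nabla f(\boldsymbol{x}^r)+\bigl(\nabla f_i(\boldsymbol{x}_i^{r,k})-\nabla f_i(\boldsymbol{x}^r)\bigr)+\boldsymbol\varepsilon_i^{r,k}.
\]
The heterogeneity piece enters with the almost-sure drift bound $\bar D_P$ from Assumption~\ref{bounded_preconditioner_drift}. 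The $k$-range shift is handled by the index-transfer remark, since $P_i^{r,0}=P^r$. This gives $\lambda^2\bar D_P\sigma_g^2$. I expect this to need a worst-case bound over clients, or a careful almost-sure pairing, rather than a product of averages; that is where I would spend the most effort. The $\nabla f$ piece gives $\lambda^2\bar D_P\|\nabla f\|^2/m$, which is absorbed by requiring $\lambda^2\bar D_P\lesssim m^2$. The trajectory piece reduces to the drift lemma again. For the noise piece, if the operator is predictable ($\chi=0$), conditioning on $\mathcal H_{r,i,k}$ kills the cross term with $\nabla f(\boldsymbol{x}^r)$. Otherwise I bound it by $\lambda^2\bar D_P\sigma^2$, which explains $B_\chi$.

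\textbf{Conclusion.} Summing the one-round inequality over $r$, telescoping with $\Delta$, and dividing by $umR/2$ gives \eqref{eq:main:fedmixp_bound}. For the rate, take $u=\Theta(R^{-1/2})$. Then $\Delta/(muR)$ and $LuV/m$ are $O(R^{-1/2})$, and the $u^2$ term is $O(R^{-1})$. With $\lambda=\Theta(R^{-1/2})$, the bias term $\lambda^2\bar D_PB_\chi/m^2$ is $O(R^{-1})$. Altogether $\mathcal G_R=O(R^{-1/2})$, and this uses only boundedness of $\bar D_P$.
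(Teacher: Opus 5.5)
\textbf{The gap.} The problem is in the step you flag as the main obstacle, and it comes from applying Young's inequality to each summand $(i,k)$ separately. That leaves you with
\[
\frac{\lambda^2}{m}\,\frac1{SK}\sum_{i\in\mathcal S_r}\sum_{k=0}^{K-1}\|P_i^{r,k}-P^r\|_{\mathrm{op}}^2\,\|\boldsymbol g_i^{r,k}\|^2,
\]
an average of products. Assumptions~\ref{bounded_heterogeneity} and~\ref{bounded_preconditioner_drift} only control the two averages separately. Once the square sits inside the average, the loss is real.

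\begin{itemize}
\item \emph{Heterogeneity piece.} The worst-case bound you suggest gives $\bar D_P\max_{i\in\mathcal S_r}\|\nabla f_i(\boldsymbol x^r)-\nabla f(\boldsymbol x^r)\|^2$. This is sharp: let one client carry all the heterogeneity and, whenever it is sampled, all of the round's drift. Then the quantity is of order $S\bar D_P\sigma_g^2$ in expectation.
\item \emph{Noise piece with $\chi=1$.} You get $\bar D_P\,\mathbb E\max_{i,k}\|\boldsymbol\varepsilon_i^{r,k}\|^2$, which in general is only bounded by $SK\bar D_P\sigma^2$. Falling back on $\|P_i^{r,k}-P^r\|_{\mathrm{op}}\le M-m$ loses $\bar D_P$ entirely.
\end{itemize}

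Your route therefore still gives $O(R^{-1/2})$ for fixed $S,K$, but not the stated bound, whose $\lesssim$ hides only universal constants. The correlation between drift and gradients that worries you is not the real issue, because $\bar D_P$ is a deterministic almost-sure bound.

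\textbf{The missing idea: keep the bias aggregated.} Write $\boldsymbol d^{r+1}=P^r\nabla f(\boldsymbol x^r)+\boldsymbol b^r+\boldsymbol n^r$ and apply Young once:
\[
|\langle\nabla f(\boldsymbol x^r),\boldsymbol b^r\rangle|\le\frac m2\|\nabla f(\boldsymbol x^r)\|^2+\frac1{2m}\|\boldsymbol b^r\|^2 .
\]
Then bound the mismatch part of $\boldsymbol b^r$ pathwise, using Cauchy--Schwarz over the whole index set and the index-transfer bound:
\[
\Bigl\|\frac1{SK}\sum_{i,k}\lambda(P_i^{r,k}-P^r)\boldsymbol v_i^{r,k}\Bigr\|^2
\le\lambda^2\Bigl(\frac1{SK}\sum_{i,k}\|P_i^{r,k}-P^r\|_{\mathrm{op}}^2\Bigr)\Bigl(\frac1{SK}\sum_{i,k}\|\boldsymbol v_i^{r,k}\|^2\Bigr)
\le\lambda^2\bar D_P\,\frac1{SK}\sum_{i,k}\|\boldsymbol v_i^{r,k}\|^2 .
\]
This is a product of averages. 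With $\boldsymbol v_i^{r,k}=\nabla f_i(\boldsymbol x^r)$, uniform sampling gives $\lambda^2\bar D_P(\|\nabla f(\boldsymbol x^r)\|^2+\sigma_g^2)$. With $\boldsymbol v_i^{r,k}=\boldsymbol\varepsilon_i^{r,k}$, it gives $\lambda^2\bar D_P\sigma^2$. The same estimate of $\mathbb E\|\boldsymbol b^r\|^2$ also controls the bias inside $\frac{Lu^2}{2}\mathbb E\|\boldsymbol d^{r+1}\|^2$, which your proposal does not address.

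\textbf{Grouping of terms.} The paper's grouping is cleaner. The mismatch multiplies only $\nabla f_i(\boldsymbol x^r)$. The trajectory term keeps the full operator $H_i^{r,k}$, and so, in the predictable case, does the noise term $\frac1{SK}\sum H_i^{r,k}\boldsymbol\varepsilon_i^{r,k}$. That noise term is then a martingale sum, which is what gives $\chi=0$. In your grouping this works only if you split off $\lambda(P_i^{r,k}-P^r)\boldsymbol\varepsilon_i^{r,k}$ before any Young step.

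\textbf{Local-drift lemma.} State it averaged over the sampled clients. Assumption~\ref{bounded_heterogeneity} does not bound an individual client's $\|\nabla f_i(\boldsymbol x^r)\|^2$.
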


\textbf{Interpretation: from bounded drift to vanishing bias.}
The two bounds have the same optimization, sampling, and
local-trajectory terms. Their distinction lies in the
systematic error induced by heterogeneous preconditioning.
For FedPO, this term scales with
$D_{\mathrm{PO}}B_\chi/m^2$ and need not vanish as
$u\rightarrow0$. Reducing the learning rate alone therefore
does not remove the bias created when clients transform
heterogeneous gradients using different preconditioners.
The additional $(M-m)^2/K$ in $D_{\mathrm{PO}}$ also records
the possible mismatch of retained client states at the
beginning of a round.
\texttt{FedMIX-P} changes the quantity that governs this bias.
Round-start alignment removes the initial mismatch, while
stepwise mixing gives the exact identity
$H_i^{r,k}-P^r=\lambda(P_i^{r,k}-P^r)$.
Consequently, the preconditioner--heterogeneity term in
Theorem~\ref{thm:main:fedmixp} is proportional to
$\lambda^2\bar D_P B_\chi/m^2$. This is an
\emph{algorithmically controlled} term: the local
preconditioner states may remain different, yet the
operators applied to gradients approach the shared
geometry as $\lambda$ decreases. Thus, bounded local-state
drift suffices for a vanishing stationarity bound; the
analysis does not assume that this drift spontaneously
converges to zero.

\textbf{Rate and role of the mixing schedule.}
Taking $u=\Theta(R^{-1/2})$ balances the initial optimality
gap with stochastic and client-sampling variation. Taking
$\lambda=\Theta(R^{-1/2})$ makes the systematic
preconditioner term $O(R^{-1})$ under bounded
$\bar D_P$, yielding
$\mathcal G_R=O(R^{-1/2})$. A fixed positive $\lambda$
does not provide the same guarantee under the stated
assumptions: its drift-dependent term remains in the bound,
and Appendix~\ref{app:mix} gives a two-client example
with a nonstationary fixed point. The schedule therefore
has a specific purpose beyond interpolation between
local and global preconditioners: it removes a persistent
source of aggregation bias while allowing stronger local
adaptation earlier in training.

\textbf{Scope of the comparison.}
These results establish an advantage in the
\emph{stationarity guarantee}, not a uniform improvement
in convergence speed or wall-clock time on every problem.
The additional communication of preconditioner states
and the quality of the shared reference must be assessed
empirically. Moreover, the guarantees apply to the
linear symmetric positive-definite model specified in
the assumptions; nonlinear orthogonalization and full
optimizer-specific state updates require separate analysis.

\begin{table*}[tb]
	\centering
	\caption{\small
		Test accuracy (\%) on CIFAR-100 and Tiny-ImageNet using \textbf{ResNet-18} and \textbf{ViT-Tiny} over 300 communication rounds under Dir-0.1 and Dir-0.05.
		Experiments are conducted with 100 clients, 10\% participation, batch size 50, and $K=50$.
		Best results are highlighted in $\mathbf{bold}$, and second-best results are \underline{underlined}.
	}
	\label{tab:combined_dir01_dir005}
	\setlength{\tabcolsep}{5pt}
	\renewcommand{\arraystretch}{1.12}
	\small
	\resizebox{\textwidth}{!}{
		\begin{tabular}{lcccccccc}
			\toprule
			\multirow{3}{*}{\textbf{Method}}
			& \multicolumn{4}{c}{\textbf{ResNet-18}}
			& \multicolumn{4}{c}{\textbf{ViT-Tiny}} \\
			\cmidrule(lr){2-5} \cmidrule(lr){6-9}
			& \multicolumn{2}{c}{\textbf{CIFAR-100}}
			& \multicolumn{2}{c}{\textbf{Tiny-ImageNet}}
			& \multicolumn{2}{c}{\textbf{CIFAR-100}}
			& \multicolumn{2}{c}{\textbf{Tiny-ImageNet}} \\
			\cmidrule(lr){2-3} \cmidrule(lr){4-5} \cmidrule(lr){6-7} \cmidrule(lr){8-9}
			& \textbf{Dir-0.1} & \textbf{Dir-0.05}
			& \textbf{Dir-0.1} & \textbf{Dir-0.05}
			& \textbf{Dir-0.1} & \textbf{Dir-0.05}
			& \textbf{Dir-0.1} & \textbf{Dir-0.05} \\
			\midrule
			FedAvg
			& 60.17 & 56.75 & 47.48 & 43.80
			& 27.24 & 23.42 & 15.68 & 14.05 \\
			
			SCAFFOLD
			& 60.69 & 56.43 & 47.76 & 43.92
			& 26.86 & 23.23 & 15.70 & 14.21 \\
			
			FedCM
			& 66.61 & 62.65 & 41.16 & 36.00
			& 28.23 & 25.74 & 18.88 & 18.15 \\
			
			Local AdamW
			& 59.23 & 55.24 & 44.01 & 40.00
			& 37.57 & 36.06 & 24.31 & 21.35 \\
			
			\midrule
			Local Sophia
			& 56.65 & 50.89 & 41.23 & 36.15
			& 34.05 & 32.25 & 22.49 & 21.14 \\
			
			\rowcolor{LightMint}
			\texttt{FedMIX-P Sophia}
			& 58.56& 54.67 & 44.21 & 37.45
			& 39.67 & 32.98 & 24.23 & 22.87 \\
			
			Local Muon
			& 67.26 & 49.86 & 52.83 & 34.76
			& 44.00 & 39.68 & 30.51 & 28.25 \\
			
			\rowcolor{LightBlue}
			\texttt{FedMIX-P Muon}
			& \textbf{72.65} & \textbf{66.87} & \textbf{58.67} & \textbf{54.23}
			& 47.98 & \underline{42.45} & 32.65 & 31.56 \\
			
			Local SOAP
			& 68.44 & 58.16 & 54.42 & 50.02
			& \underline{49.41} & 41.68 & \underline{33.30} & \underline{30.36} \\
			
			\rowcolor{LightRed}
			\texttt{FedMIX-P SOAP}
			& \underline{70.34} & \underline{65.18} & \underline{56.09} & \underline{52.54}
			& \textbf{51.56} & \textbf{48.45} & \textbf{35.12} & \textbf{32.05} \\
			\bottomrule
		\end{tabular}
	}
\end{table*}

\section{Experiments}
\vspace{-2mm}
\textbf{Datasets.} We evaluate \texttt{FedMIX-P} on both vision and language tasks. (\textit{i}) For image classification, we use CIFAR-100~\cite{krizhevsky2009learning}, and Tiny-ImageNet~\cite{le2015tiny}. (\textit{ii}) For NLP tasks, we adopt C4 \cite{raffel2020exploring} dataset. To simulate data heterogeneity across clients, we follow the Dirichlet partitioning scheme~\cite{hsu2019measuring}. For Dir-$\alpha$ partitioning, smaller $\alpha$ indicates more severe data heterogeneity.\\
\textbf{Model Architectures.} We explore a variety of model types: (\textit{i}) ResNet-18~\cite{he2016deep} as a representative convolutional neural network (CNN), (\textit{ii}) Vision Transformer (ViT-Base) and ViT-Tiny~\cite{dosovitskiy2020image} for Vision Transformers, and (\textit{iii}) LLaMA \cite{touvron2023llama} for large-scale language model.\\
\textbf{Baselines.} We compare our method against  state-of-the-art FL algorithms: \texttt{FedAvg} (\texttt{Local SGD})~\cite{mcmahan2017communication}, \texttt{SCAFFOLD}~\cite{karimireddy2020scaffold}, \texttt{FedCM}~\cite{xu2021fedcm}, \texttt{Local AdamW}, \texttt{Local Sophia}, \texttt{Local Muon} and \texttt{Local SOAP}. Under our \texttt{FedMIX-P} framework, we instantiate three variants: \texttt{FedMIX-P Sophia}, \texttt{FedMIX-P Muon}, and \texttt{FedMIX-P SOAP}.
In the Appendix (Table), we compare additional FL algorithms designed to address data heterogeneity.
\\
\textbf{Hyperparameter Settings.} For \texttt{FedAvg}, \texttt{SCAFFOLD}, \texttt{FedCM}, the $lr$ is selected from 
$\{10^{-2},\ 3 \times 10^{-2},\ 5 \times 10^{-2},\ 10^{-1},\ 3 \times 10^{-1}\}$, and choose the best value $lr=10^{-1}$.
with a weight decay of $0.001$.
For \texttt{Local AdamW}, the $lr$ is selected from 
$\{10^{-4},\ 3 \times 10^{-4},\ 5 \times 10^{-4},\ 8 \times 10^{-4},\ 10^{-3}\}$, and choose the best value
$lr=3\!\times\!10^{-4}$. We set the learning rates for \texttt{Local Sophia}, \texttt{Local Muon}, and \texttt{Local SOAP} to
$3\!\times\!10^{-4}$, $3\!\times\!10^{-2}$, and $3\!\times\!10^{-3}$, respectively, $\gamma=\eta K$.
Their  weight decay is  $0.01$. We apply cosine learning rate decay, and set \texttt{FedMIX-P} to \textbf{$\boldsymbol{\lambda}\!=\!0.5$}, weight decay $0.01$. We set the learning rate of \texttt{FedMIX-P} variants to be same with \texttt{Local Sophia}, \texttt{Local Muon} and \texttt{Local SOAP}. Additional hyperparameter configurations are detailed in the \textbf{Appendix}. We release all code to ensure full reproducibility upon acceptance.  All results are averaged over 5 runs  with seeds {42, 43, 44, 45, 46}. All experiments were performed on  NVIDIA RTX 4090 GPU.

\begin{table*}[tb]
	\centering
	\caption{\small
		Comparison of different methods on vision and language tasks.
		On \textbf{ViT-Base}, results are reported under the Dir-0.1 partition over 100 communication rounds with 50 clients, 10\% participation, batch size 16, and $K=50$ local steps.
		On \textbf{LLaMA}, we report validation loss and validation perplexity (\textbf{Val PPL}) over 100 communication rounds with 20 clients, 20\% participation, batch size 16, and $K=50$ local steps.
	}
	\label{tab:merged_vit_llama}
	\setlength{\tabcolsep}{4pt}
	\renewcommand{\arraystretch}{1.12}
	\small
	\resizebox{\textwidth}{!}{
		\begin{tabular}{lcccccccccc}
			\toprule
			\multirow{3}{*}{\textbf{Method}}
			& \multicolumn{4}{c}{\textbf{ViT-Base}}
			& \multicolumn{6}{c}{\textbf{LLaMA}} \\
			\cmidrule(lr){2-5} \cmidrule(lr){6-11}
			& \multicolumn{2}{c}{\textbf{CIFAR-100}}
			& \multicolumn{2}{c}{\textbf{Tiny-ImageNet}}
			& \multicolumn{2}{c}{\textbf{LLaMA 60M}}
			& \multicolumn{2}{c}{\textbf{LLaMA 130M}}
			& \multicolumn{2}{c}{\textbf{LLaMA 350M}} \\
			\cmidrule(lr){2-3} \cmidrule(lr){4-5}
			\cmidrule(lr){6-7} \cmidrule(lr){8-9} \cmidrule(lr){10-11}
			& \textbf{Acc. $\uparrow$} & \textbf{Loss $\downarrow$}
			& \textbf{Acc. $\uparrow$} & \textbf{Loss $\downarrow$}
			& \textbf{Loss $\downarrow$} & \textbf{Val PPL $\downarrow$}
			& \textbf{Loss $\downarrow$} & \textbf{Val PPL $\downarrow$}
			& \textbf{Loss $\downarrow$} & \textbf{Val PPL $\downarrow$} \\
			\midrule
			FedAvg
			& $89.28$ & $0.332$ & $86.97$ & $0.421$
			& $5.156$ & $173.47$ & $5.258$ & $192.10$ & $5.354$ & $211.45$ \\
			
			SCAFFOLD
			& $89.35$ & $0.328$ & $86.85$ & $0.432$
			& $5.028$ & $152.63$ & $5.238$ & $188.29$ & $5.365$ & $213.79$ \\
			
			FedCM
			& $87.44$ & $0.592$ & $84.24$ & $0.708$
			& $5.231$ & $186.98$ & $5.356$ & $211.88$ & $5.426$ & $227.24$ \\
			
			Local AdamW
			& $90.10$ & $0.281$ & $86.65$ & $0.252$
			& $4.164$ & $64.33$ & $4.056$ & $57.74$ & $3.798$ & $44.61$ \\
			
			\midrule
			Local Sophia
			& $90.21$ & $0.275$ & $86.16$ & $0.188$
			& $4.012$ & $55.26$ & $3.986$ & $53.84$ & $3.521$ & $33.82$ \\
			
			\rowcolor{LightMint}
			\texttt{FedMIX-P Sophia}
			& $90.38$ & $0.262$ & $86.55$ & $0.175$
			& $3.864$ & $47.66$ & $3.752$ & $42.61$ & $3.485$ & $32.62$ \\
			
			Local Muon
			& $90.32$ & $0.189$ & $87.22$ & $0.184$
			& $3.556$ & $35.02$ & $3.456$ & $31.69$ & $3.365$ & $28.93$ \\
			
			\rowcolor{LightBlue}
			\texttt{FedMIX-P Muon}
			& \underline{$90.46$} & $0.165$ & $87.63$ & \underline{$0.171$}
			& $3.445$ & \underline{$31.34$} & \underline{$3.329$} & \underline{$27.91$} & \underline{$3.302$} & \underline{$27.17$} \\
			
			Local SOAP
			& $90.44$ & \underline{$0.151$} & \underline{$87.72$} & $0.212$
			& \underline{$3.408$} & $30.20$ & $3.345$ & $28.36$ & $3.312$ & $27.44$ \\
			
			\rowcolor{LightRed}
			\texttt{FedMIX-P SOAP}
			& $\mathbf{90.65}$ & $\mathbf{0.129}$ & $\mathbf{87.92}$ & $\mathbf{0.156}$
			& $\mathbf{3.314}$ & $\mathbf{27.49}$ & $\mathbf{3.306}$ & $\mathbf{27.28}$ & $\mathbf{3.228}$ & $\mathbf{25.23}$ \\
			\bottomrule
		\end{tabular}
	}
\end{table*}

\begin{table*}[tb]
	\centering
	\caption{\small
		Sensitivity to the mixing weight $\lambda$ for
		\texttt{FedMIX-P SOAP} on CIFAR-100 under Dir-$0.05$
		after 300 communication rounds.
		Accuracies are reported in percentages.
		The results at $\lambda=0$, $0.5$, and $1$ match
		Table~\ref{tab:soap_ablation_comm}.
		Best accuracies are highlighted in bold, and second-best
		accuracies are underlined.
		A dash indicates an unreported drift measurement.
	}
	\label{tab:beta_ablation}
	\small
	\setlength{\tabcolsep}{5pt}
	\renewcommand{\arraystretch}{1.12}
	\resizebox{\textwidth}{!}{%
		\begin{tabular}{lccccccccccc}
			\toprule
			\textbf{Model / $\lambda$}
			& \textbf{0.0} & \textbf{0.1} & \textbf{0.2}
			& \textbf{0.3} & \textbf{0.4} & \textbf{0.5}
			& \textbf{0.6} & \textbf{0.7} & \textbf{0.8}
			& \textbf{0.9} & \textbf{1.0} \\
			\midrule
			\textbf{ResNet-18}
			& $54.25$ & $62.58$ & $63.26$ & $63.22$
			& $63.49$ & $\mathbf{65.18}$ & $\mathbf{65.18}$
			& \underline{$64.26$} & $62.52$ & $58.94$ & $61.26$ \\
			\midrule
			\textbf{ViT-Tiny}
			& $38.65$ & $45.24$ & $46.11$ & $46.22$
			& $47.01$ & $\mathbf{48.45}$ & \underline{$47.05$}
			& $46.15$ & $45.74$ & $45.60$ & $45.86$ \\
			$D_P$ of ViT-Tiny $\downarrow$
			& $16.98$ & $15.78$ & $13.55$ & $11.52$
			& $9.55$ & $8.20$ & $7.05$
			& $6.19$ & $4.40$ & $2.27$ & -- \\
			\bottomrule
		\end{tabular}%
	}
\end{table*}

\subsection{Results on Convolutional Neural Networks}
\vspace{-2mm}
\paragraph{ResNet-18 on CIFAR-100 and Tiny-ImageNet.}
Table~\ref{tab:combined_dir01_dir005} reports test accuracy
after 300 communication rounds under Dir-0.1 and Dir-0.05.
\texttt{FedMIX-P} improves each corresponding local optimizer in
all eight ResNet-18 comparisons. The largest differences
occur for Muon under the more heterogeneous Dir-0.05
partition. On CIFAR-100, Local Muon declines from
$67.26\%$ at Dir-0.1 to $49.86\%$ at Dir-0.05, whereas
\texttt{FedMIX-P Muon} obtains $72.65\%$ and $66.87\%$,
respectively. Its advantage over Local Muon thus grows
from $5.39$ to $17.01$ percentage points. On
Tiny-ImageNet, the corresponding results are
$52.83\%$/$34.76\%$ for Local Muon and
$58.67\%$/$54.23\%$ for \texttt{FedMIX-P Muon},
yielding gains of $5.84$/$19.47$ points.
The improvements are not confined to Muon. On CIFAR-100,
\texttt{FedMIX-P SOAP} increases accuracy over Local SOAP
from $68.44\%$ to $70.34\%$ at Dir-0.1 and from
$58.16\%$ to $65.18\%$ at Dir-0.05. The corresponding
gains for Sophia are $1.91$ and $3.78$ points.
These comparisons are consistent with the proposed
geometry-mismatch mechanism: the benefit of controlling
locally applied preconditioners becomes more pronounced
in the settings where uncoordinated local optimization
loses the most accuracy. Accuracy alone, however, does
not isolate preconditioner drift as the cause; the drift
diagnostic and component ablations provide separate
evidence for that interpretation.

\subsection{Results on Vision Transformer}
\label{sec:vit}
\vspace{-2mm}

\textbf{Training ViT-Tiny from scratch.}
Table~\ref{tab:combined_dir01_dir005} shows that \texttt{FedMIX-P}
outperforms its corresponding local optimizer in all twelve
ViT-Tiny comparisons across two datasets and two Dirichlet partitions.
The benefit is particularly evident for SOAP on CIFAR-100:
\texttt{FedMIX-P SOAP} improves accuracy from $49.41\%$ to $51.56\%$
under Dir-$0.1$, and from $41.68\%$ to $48.45\%$ under Dir-$0.05$.
As heterogeneity increases, its advantage therefore grows from
$2.15$ to $6.77$ percentage points, while the accuracy decline between
the two partitions decreases from $7.73$ to $3.11$ points.
On Tiny-ImageNet, \texttt{FedMIX-P SOAP} achieves the best accuracy
among the evaluated methods under both partitions
($35.12\%$/$32.05\%$).
The gains also extend to Muon: \texttt{FedMIX-P Muon} improves
accuracy from $30.51\%$/$28.25\%$ to $32.65\%$/$31.56\%$.
These results support the empirical value of combining shared
geometry with local adaptation, with the SOAP results on CIFAR-100
showing reduced sensitivity to stronger data heterogeneity.

\textbf{Fine-tuning pretrained ViT-Base.}
Table~\ref{tab:merged_vit_llama} evaluates \texttt{FedMIX-P}
when training starts from a pretrained ViT-Base.
Under Dir-$0.1$, \texttt{FedMIX-P SOAP} achieves
$90.65\%$ on CIFAR-100 and $87.92\%$ on Tiny-ImageNet,
compared with $90.44\%$ and $87.72\%$ for Local SOAP.
The Sophia and Muon variants likewise obtain higher mean accuracy
than their respective local counterparts on both datasets.
Although the observed gains are modest in this setting,
they suggest that the benefits of mixing shared and local
update geometry can extend to federated fine-tuning.
Because these experiments differ from the ViT-Tiny setting
in architecture, initialization, and training protocol,
the difference in gain magnitude cannot be attributed
to pretraining alone.

\subsection{C4 Federated Pre-training with LLaMA}

Table~\ref{tab:merged_vit_llama} reports validation results after
100 communication rounds for LLaMA models with 60M, 130M, and 350M
parameters. The federated setup comprises 20 clients with 20\%
participation per round. Local adaptive optimizers substantially
improve on the first-order federated baselines; for example,
Local SOAP obtains validation losses of
$3.408/3.345/3.312$ across the three model sizes, compared with
$5.156/5.258/5.354$ for FedAvg. Preconditioner mixing delivers
further gains on top of these strong local optimizers.
\texttt{FedMIX-P SOAP} reduces the corresponding losses to
$3.314/3.306/3.228$ and achieves the lowest validation loss at
every model size. Its validation perplexities are likewise the
lowest, at $27.49/27.28/25.23$. The improvement extends beyond
SOAP: \texttt{FedMIX-P Muon} lowers Local Muon's validation loss from
$3.556/3.456/3.365$ to $3.445/3.329/3.302$.
These paired comparisons suggest that sharing a reference
preconditioner can improve federated language-model pre-training
even when the clients already use effective local optimizers.

\subsection{Ablation Study}
\vspace{-2mm}

\textbf{A1: Sensitivity to the mixing weight $\lambda$.}
The mixing weight controls how much the applied operator relies on
the evolving local preconditioner:
$H_i^{r,k}=(1-\lambda)P^r+\lambda P_i^{r,k}$.
The endpoints therefore isolate shared-only preconditioning
($\lambda=0$) and local-only preconditioning after round-wise
synchronization ($\lambda=1$). Intermediate values retain both
sources of information. We report the accuracy and measured
preconditioner-state drift across mixing weights in
Table~\ref{tab:beta_ablation}; the drift metric describes the
\emph{states}, whereas the update rule mixes the \emph{applied
	operators}. Accordingly, state drift alone does not determine
predictive accuracy.

\textbf{A2: Shared and local preconditioning.}
Table~\ref{tab:soap_ablation_comm} isolates the roles of
synchronization and within-step mixing on CIFAR-100 under Dir-$0.05$.
Shared-only preconditioning obtains $54.25\%$/$38.65\%$ accuracy
on ResNet-18/ViT-Tiny, while local-only preconditioning after
synchronization obtains $61.26\%$/$45.86\%$.
Combining the two in \texttt{FedMIX-P SOAP} increases accuracy to
$65.18\%$/$48.45\%$, exceeding either endpoint on both models.
The synchronized local-only variant also outperforms
\texttt{Local SOAP}, which retains its client-specific state across
rounds ($58.16\%$/$41.68\%$). These comparisons separately support
the value of round-wise synchronization and of mixing the shared
reference with local adaptation.

\textbf{A3: Communication-efficient state exchange.}
On ViT-Tiny, full \texttt{FedMIX-P SOAP} improves accuracy over
\texttt{Local SOAP} from $41.68\%$ to $48.45\%$, while increasing
per-round communication from $22.8$ to $45.6$\,MB.
The compressed \texttt{FedMIX-P SOAP-light} variant reaches
$47.89\%$ using $25.08$\,MB per round: it retains all but
$0.56$ percentage points of the full method's accuracy gain
at approximately $1.10\times$ the communication of Local SOAP.
Measured computation times are $5.56$\,s for Local SOAP and
$5.67$\,s for both \texttt{FedMIX-P} variants. In this setting, compressed
state exchange recovers most of the accuracy benefit with
substantially lower communication than full state exchange.

\begin{table*}[tb]
	\centering
	\caption{\small
		Ablation of shared and local preconditioning on CIFAR-100
		under Dir-$0.05$ after 300 communication rounds.
		Local SOAP and \texttt{FedMIX-P SOAP} results match
		Table~\ref{tab:combined_dir01_dir005}.
		Communication volume and computation time are measured
		per round on ViT-Tiny; communication ratios are relative
		to Local SOAP. Best accuracies are highlighted in bold.
	}
	\label{tab:soap_ablation_comm}
	\small
	\setlength{\tabcolsep}{3pt}
	\renewcommand{\arraystretch}{1.15}
	\resizebox{\textwidth}{!}{%
		\begin{tabular}{llccccc}
			\toprule
			\textbf{Method}
			& \textbf{Applied preconditioner}
			& \textbf{ResNet-18 Acc. (\%)}
			& \textbf{ViT-Tiny Acc. (\%)}
			& \textbf{Comm. (MB)}
			& \textbf{Time (s)}
			& \textbf{Comm. ratio} \\
			\midrule
			Local SOAP
			& $P_i^{r,k}$; retained local state
			& $58.16$ & $41.68$
			& $22.80$ & $5.56$ & $1.0\times$ \\
			
			Shared only
			& $P^r$ ($\lambda=0$)
			& $54.25$ & $38.65$
			& $45.60$ & $5.12$ & $2.0\times$ \\
			
			Local only after synchronization
			& $P_i^{r,k}$ ($\lambda=1$)
			& $61.26$ & $45.86$
			& $45.60$ & $5.56$ & $2.0\times$ \\
			
			\texttt{FedMIX-P SOAP}
			& $(1-\lambda)P^r+\lambda P_i^{r,k}$ ($\lambda=0.5$)
			& $\mathbf{65.18}$ & $\mathbf{48.45}$
			& $45.60$ & $5.67$ & $2.0\times$ \\
			
			\texttt{FedMIX-P SOAP-light}
			& Mixing with compressed shared state
			& $64.86$ & $47.89$
			& $25.08$ & $5.67$ & $1.1\times$ \\
			\bottomrule
		\end{tabular}%
	}
\end{table*}

\section{Conclusion}
\vspace{-2mm}
Preconditioners improve local optimization, but their effectiveness in federated training also depends on how client update geometries interact. We show that differences in client preconditioners can bias aggregation under heterogeneous data, even when gradients are evaluated at the same model. To address this problem, \texttt{FedMIX-P} mixes a shared reference with each client's adaptive preconditioner at every local step, retaining local adaptation while controlling the mismatch between applied operators. For bounded linear symmetric positive-definite preconditioners, our nonconvex analysis bounds the contribution of geometric mismatch by a term proportional to $\lambda^2\bar D_P$. A suitable horizon-dependent mixing weight therefore yields a vanishing stationarity bound without requiring local preconditioners to converge to one another. Experiments on vision and language models demonstrate improvements over the corresponding local optimizers, particularly under severe data heterogeneity. These results establish the value of preconditioner coordination for federated model training. Extending the analysis to the full nonlinear and momentum-based updates of practical optimizers remains an open problem.

\newpage

\bibliography{main}
\bibliographystyle{iclr2027_conference}

\newtheorem{mixlemma}{Lemma}[section]
\newtheorem{mixtheorem}[mixlemma]{Theorem}
\newtheorem{mixcorollary}[mixlemma]{Corollary}
\newtheorem{mixproposition}[mixlemma]{Proposition}

\newpage
\section{Convergence of FedPO and FedMIX-P}
\label{app:mix}

\paragraph{Algorithms and scope.}
Let $f(x)=N^{-1}\sum_{i=1}^N f_i(x)$ and $|\mathcal S_r|=S$.
In FedPO, client $i$ retains its preconditioner state between
participations and applies $Q_i^{r,k}=P_i^{r,k}$ at local step $k$.
The server broadcasts only $x^r$. In FedMIX-P, the server broadcasts
$(x^r,P^r)$, the client sets $P_i^{r,0}=P^r$, and applies
\begin{equation}
	Q_i^{r,k}=H_i^{r,k}:=(1-\lambda)P^r+\lambda P_i^{r,k},
	\qquad 0\le\lambda\le1.                         \label{eq:mix:operator}
\end{equation}
Both methods use the update
\begin{equation}
	x_i^{r,0}=x^r,\quad
	x_i^{r,k+1}=x_i^{r,k}-\eta Q_i^{r,k}g_i^{r,k},\quad
	x^{r+1}=\frac1S\sum_{i\in\mathcal S_r}x_i^{r,K}. \label{eq:mix:algorithm}
\end{equation}
The last identity corresponds to server stepsize $\gamma=K\eta$ in
the supplied FedMIX-P algorithm; its direction is exactly
$-(SK\eta)^{-1}\sum_i(x_i^{r,K}-x^r)$.
For FedPO, $P^r$ below is an analysis reference, not a broadcast or
an additional algorithmic operation. For FedMIX-P, it is the shared
preconditioner in the algorithm. In the two displayed algorithms, the applied $P_i^{r,k}$ is fixed
before the current mini-batch is drawn and the gradient updates
$P_i^{r,k+1}$ afterward. We also analyze an implementation that
constructs the applied operator using its current mini-batch, as
permitted by A.1--A.5.
All results concern linear symmetric positive-definite operators.
The predictable-operator results are followed by a theorem that permits
current-sample-dependent operators under A.1--A.5 alone.

\paragraph{Assumptions A.1--A.5.}
We restate the five assumptions to make the stochastic conditioning and
the drift indices precise.
\begin{enumerate}
	\item[A.1.] Each $f_i$ is $L$-smooth and $f(x)\ge f_*>-\infty$;
	$x^0$ is deterministic and $\Delta=f(x^0)-f_*<\infty$.
	\item[A.2.] Conditional on the pre-round information $\mathcal F_r$,
	$\mathcal S_r$ is a uniform size-$S$ subset, independently of
	current mini-batches. With
	$g_i^{r,k}=\nabla f_i(x_i^{r,k})+\varepsilon_i^{r,k}$ and
	$\mathcal H_{r,i,k}$ the information before its current sample,
	\begin{equation}
		\mathbb E[\varepsilon_i^{r,k}\mid\mathcal H_{r,i,k}]=0,
		\qquad
		\mathbb E[\|\varepsilon_i^{r,k}\|^2\mid\mathcal H_{r,i,k}]
		\le\sigma^2.                                  \label{eq:mix:noiseass}
	\end{equation}
	The applied operator may use the current mini-batch.
	Predictability, when invoked, means $Q_i^{r,k}$ is
	$\mathcal H_{r,i,k}$-measurable. Fresh independent
	mini-batches suffice for the calculations below.
	\item[A.3.] For every $x$,
	$N^{-1}\sum_i\|\nabla f_i(x)-\nabla f(x)\|^2\le\sigma_g^2$.
	\item[A.4.] Almost surely, $mI\preceq P_i^{r,k},P^r\preceq MI$
	for $0<m\le M<\infty$; $P^r$ is $\mathcal F_r$-measurable.
	\item[A.5.] For the clients selected in round $r$, almost surely,
	\begin{equation}
		d_{r,+}:=\frac1{SK}\sum_{i\in\mathcal S_r}
		\sum_{k=1}^{K}\|P_i^{r,k}-P^r\|^2\le\bar D_P,
		\qquad \bar D_P<\infty.                       \label{eq:mix:driftass}
	\end{equation}
\end{enumerate}
For FedPO, A.5 is a condition relative to an auxiliary predictable
reference $P^r$; it must be verified for a concrete state update. It
does not introduce preconditioner communication into FedPO.

Write
\begin{equation}
	u=K\eta,\quad G_r=\|\nabla f(x^r)\|^2,\quad
	\rho_S=\begin{cases}(N-S)/(S(N-1)),&N>1,\\0,&N=1,\end{cases}
	\quad V=M^2\left(\frac{\sigma^2}{SK}+\rho_S\sigma_g^2\right).
	\label{eq:mix:notation}
\end{equation}
The following two cases are analyzed together:
\begin{equation}
	(\theta,D)=
	\begin{cases}
		(1,\bar D_P+(M-m)^2/K),&\text{FedPO},\\
		(\lambda,\bar D_P),&\text{FedMIX-P}.
	\end{cases}                                           \label{eq:mix:cases}
\end{equation}

\begin{mixlemma}[Applied-index drift]
	\label{lem:mix:index}
	For both algorithms, almost surely,
	\begin{equation}
		d_{r,-}:=\frac1{SK}\sum_{i\in\mathcal S_r}
		\sum_{k=0}^{K-1}\|P_i^{r,k}-P^r\|^2\le D,
		\qquad
		\frac1{SK}\sum_{i\in\mathcal S_r}\sum_{k=0}^{K-1}
		\|Q_i^{r,k}-P^r\|^2\le\theta^2D.        \label{eq:mix:effective}
	\end{equation}
\end{mixlemma}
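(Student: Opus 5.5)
The plan is to compare the applied-index sum with the assumed sum in A.5 by an index shift, and then to apply the mixing identity termwise. For each selected client, write
\[
\sum_{k=0}^{K-1}\|P_i^{r,k}-P^r\|^2
=\|P_i^{r,0}-P^r\|^2+\sum_{k=1}^{K}\|P_i^{r,k}-P^r\|^2-\|P_i^{r,K}-P^r\|^2 .
\]
The last term is nonnegative, so after dropping it and averaging over $i\in\mathcal S_r$ with weight $1/(SK)$ we get
\[
d_{r,-}\le d_{r,+}+\frac1{SK}\sum_{i\in\mathcal S_r}\|P_i^{r,0}-P^r\|^2 .
\]
Combined with A.5, this gives $d_{r,-}\le \bar D_P+\frac1{SK}\sum_{i}\|P_i^{r,0}-P^r\|^2$. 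The two algorithms then differ only in the round-start term.

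\emph{FedMIX-P.} Round-start alignment $P_i^{r,0}=P^r$ makes the extra term vanish, so $d_{r,-}\le\bar D_P=D$.

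\emph{FedPO.} The retained state $P_i^{r,0}$ and the analysis reference $P^r$ both lie in $[mI,MI]$ by A.4. For symmetric operators this gives $-(M-m)I\preceq P_i^{r,0}-P^r\preceq (M-m)I$, hence $\|P_i^{r,0}-P^r\|\le M-m$. The extra term is then at most $\frac{1}{SK}\cdot S(M-m)^2=(M-m)^2/K$, so $d_{r,-}\le\bar D_P+(M-m)^2/K=D$.

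For the second inequality in \eqref{eq:mix:effective}, FedPO has $Q_i^{r,k}=P_i^{r,k}$ and $\theta=1$, so it coincides with the first. For FedMIX-P, \eqref{eq:mix:operator} gives the exact identity $Q_i^{r,k}-P^r=\lambda(P_i^{r,k}-P^r)$. Squaring norms yields $\|Q_i^{r,k}-P^r\|^2=\lambda^2\|P_i^{r,k}-P^r\|^2$, and averaging gives $\lambda^2 d_{r,-}\le\lambda^2\bar D_P=\theta^2 D$.

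All steps hold pointwise on the almost-sure events of A.4 and A.5, so the conclusions hold almost surely. The only step needing care is the operator-norm bound $\|A-B\|\le M-m$ for $A,B\in[mI,MI]$. It follows from $A-B\preceq MI-mI$ and $B-A\preceq MI-mI$, using that for a symmetric matrix the operator norm equals the largest absolute eigenvalue.
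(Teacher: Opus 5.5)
Your proposal is correct and follows the paper's proof essentially step for step. Both arguments shift the index, drop the nonnegative terminal term $J_{r,K}$, and bound the round-start term. That term is zero for FedMIX-P by alignment, and at most $(M-m)^2/K$ for FedPO by the spectral interval. Both then apply the termwise identity $Q_i^{r,k}-P^r=\theta(P_i^{r,k}-P^r)$.
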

\begin{proof}
	Let $J_{r,j}=S^{-1}\sum_{i\in\mathcal S_r}
	\|P_i^{r,j}-P^r\|^2$. Cancellation of indices $1,\ldots,K-1$
	gives the exact identity
	\begin{equation}
		d_{r,-}=d_{r,+}+\frac{J_{r,0}-J_{r,K}}K
		\le\bar D_P+\frac{J_{r,0}}K.             \label{eq:mix:transfer}
	\end{equation}
	For FedMIX-P, $J_{r,0}=0$ because $P_i^{r,0}=P^r$.
	For FedPO, the bounds $mI\preceq P_i^{r,0},P^r\preceq MI$
	give $J_{r,0}\le(M-m)^2$; its retained local states are not reset.
	Finally, $Q_i^{r,k}-P^r=\theta(P_i^{r,k}-P^r)$ in
	both cases. This also establishes the result for $K=1$.
\end{proof}

\begin{mixlemma}[Local displacement]
	\label{lem:mix:trajectory}
	If $2L^2M^2u^2\le1/2$, then
	\begin{align}
		T_r&:=\mathbb E\!\left[\left.
		\frac1{SK}\sum_{i\in\mathcal S_r}\sum_{k=0}^{K-1}
		\|x_i^{r,k}-x^r\|^2\right|\mathcal F_r\right]\notag\\
		&\le4M^2u^2(G_r+\sigma_g^2)+2M^2u^2\sigma^2.
		\label{eq:mix:trajectory}
	\end{align}
\end{mixlemma}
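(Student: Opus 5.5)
The plan is to unroll the local recursion and bound the displacement directly, without a per-step induction. Fix $i\in\mathcal S_r$ and write
\[
x_i^{r,k}-x^r=-\eta\sum_{j=0}^{k-1}Q_i^{r,j}g_i^{r,j}.
\]
Then split $g_i^{r,j}=\nabla f_i(x_i^{r,j})+\varepsilon_i^{r,j}$ and further
\[
\nabla f_i(x_i^{r,j})=\bigl(\nabla f_i(x_i^{r,j})-\nabla f_i(x^r)\bigr)+\nabla f_i(x^r).
\]
The operators satisfy $\|Q_i^{r,j}\|_{\mathrm{op}}\le M$ almost surely in both cases. For FedPO this is A.4 directly. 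For FedMIX-P, $H_i^{r,k}$ is a convex combination of operators with $mI\preceq\cdot\preceq MI$. The drift Assumption A.5 is not needed for this lemma.

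For the deterministic part, Cauchy--Schwarz on at most $K$ terms and $L$-smoothness give
\[
\Bigl\|\eta\sum_{j<k}Q_i^{r,j}\nabla f_i(x_i^{r,j})\Bigr\|^2\le 2\eta^2M^2K\sum_{j<K}\Bigl(L^2\|x_i^{r,j}-x^r\|^2+\|\nabla f_i(x^r)\|^2\Bigr).
\]
The noise part needs care, because $Q_i^{r,j}$ may depend on the current mini-batch. I will not rely on martingale orthogonality and will instead use Cauchy--Schwarz again:
\[
\Bigl\|\eta\sum_{j<k}Q_i^{r,j}\varepsilon_i^{r,j}\Bigr\|^2\le \eta^2M^2K\sum_{j<K}\|\varepsilon_i^{r,j}\|^2.
\]
Its conditional expectation is at most $\eta^2M^2K^2\sigma^2=M^2u^2\sigma^2$ by A.2. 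This is crude, but it yields the stated $2M^2u^2\sigma^2$ term after the outer factor $2$ from $\|a+b\|^2\le2\|a\|^2+2\|b\|^2$. It also works uniformly for $\chi\in\{0,1\}$.

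Combining the pieces, averaging over $k$, and then averaging over $i\in\mathcal S_r$ and taking $\mathbb E[\cdot\mid\mathcal F_r]$ gives
\[
T_r\le 4L^2M^2u^2\,T_r+4M^2u^2\,\mathbb E\Bigl[\frac1S\sum_{i\in\mathcal S_r}\|\nabla f_i(x^r)\|^2\Bigm|\mathcal F_r\Bigr]+2M^2u^2\sigma^2 .
\]
Uniform sampling in A.2 is independent of $x^r$, which is $\mathcal F_r$-measurable. So the middle expectation equals $N^{-1}\sum_i\|\nabla f_i(x^r)\|^2=G_r+N^{-1}\sum_i\|\nabla f_i(x^r)-\nabla f(x^r)\|^2\le G_r+\sigma_g^2$ by A.3. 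The hypothesis $2L^2M^2u^2\le1/2$ gives $4L^2M^2u^2\le1$, which is the borderline case; absorbing $T_r$ needs a strict contraction.

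This absorption is the main obstacle: the constants must be tracked so the self-referential term has coefficient at most $1/2$. I would first tighten the split by using the three-way inequality only once, with weights $(3,3,3)$ or $(4,4,2)$, so that the $L^2$ term carries $2L^2M^2u^2\le1/2$ rather than $4L^2M^2u^2$. Then $T_r\le\tfrac12T_r+\dots$, and after multiplying by $2$ the target constants $4$ and $2$ should emerge. If they do not match exactly, I would re-weight Young's inequality, e.g.\ $(1+\tfrac1a)$ and $(1+a)$ factors, to land on exactly $4M^2u^2(G_r+\sigma_g^2)+2M^2u^2\sigma^2$. I would also check finiteness of $T_r$ before absorbing, which holds by the bounded-operator recursion since each step has finite second moment.
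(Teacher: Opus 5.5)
Your argument does not close, and the repairs you list cannot close it with the stated constants. Your final inequality $T_r\le 4L^2M^2u^2T_r+4M^2u^2(G_r+\sigma_g^2)+2M^2u^2\sigma^2$ has a self-referential coefficient that can equal $1$ under the hypothesis, as you note.

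\begin{itemize}
\item The weights $(4,4,2)$ are exactly the split you already used. The smoothness piece still carries $4L^2M^2u^2$, not $2L^2M^2u^2$.
\item The weights $(3,3,3)$ give a coefficient of at most $3/4$, so absorption works. But the result is $12M^2u^2(G_r+\sigma_g^2)+12M^2u^2\sigma^2$.
\item The weights $(2,4,4)$ give constants $8$ and $8$.
\end{itemize}

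More generally, keep your crude counting ($k\le K$ in every Cauchy--Schwarz factor). Then no weights $(\alpha,\beta,\gamma)$ with $\alpha^{-1}+\beta^{-1}+\gamma^{-1}\le1$ give $4$ and $2$ at the admissible boundary $L^2M^2u^2=1/4$. To see this, write $\alpha=4(1-t)$. You would need $\beta\le4t$ and $\gamma\le2t$, hence $\frac{1}{4(1-t)}+\frac{3}{4t}\le1$, i.e.\ $4t^2-6t+3\le0$. This has no real solution. So your route proves the lemma only with larger constants, and these explicit constants feed into $a$ and $b$ in Lemma~\ref{lem:mix:direction}.

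The missing idea is to not split off the noise before taking expectations. The paper applies Cauchy--Schwarz to the unsplit terms, $\|x_i^{r,k}-x^r\|^2\le\eta^2kM^2\sum_{j<k}\|g_i^{r,j}\|^2$. This removes the operators pathwise through $\|Q_i^{r,j}\|\le M$, and only afterwards conditions on $\mathcal H_{r,i,j}$. Since $x_i^{r,j}$, and hence $\nabla f_i(x_i^{r,j})$, is $\mathcal H_{r,i,j}$-measurable, and $\varepsilon_i^{r,j}$ has conditional mean zero, the cross term vanishes within each step:
\[
\mathbb E[\|g_i^{r,j}\|^2\mid\mathcal H_{r,i,j}]=\|\nabla f_i(x_i^{r,j})\|^2+\mathbb E[\|\varepsilon_i^{r,j}\|^2\mid\mathcal H_{r,i,j}]\le2\|\nabla f_i(x^r)\|^2+2L^2\|x_i^{r,j}-x^r\|^2+\sigma^2 .
\]
No predictability of $Q_i^{r,j}$ is used, because the operators are already gone. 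So your worry about current-sample-dependent operators does not apply at this point. The paper obtains $T_r\le u^2M^2[2(G_r+\sigma_g^2)+\sigma^2+2L^2T_r]$, whose coefficient is exactly $2L^2M^2u^2\le1/2$. Rearranging gives the stated constants $4$ and $2$.

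Alternatively, you can rescue your three-way split by keeping the factor $k$ instead of replacing it with $K$. Use $\frac1K\sum_{k<K}k\sum_{j<k}y_j\le\frac{K-1}{2}\sum_{j<K}y_j$ and $\frac1K\sum_{k<K}k^2\le K^2/3$. Then the $(3,3,3)$ split gives $T_r\le\frac32L^2M^2u^2T_r+M^2u^2(G_r+\sigma_g^2)+M^2u^2\sigma^2$, so the constants become $8/5$ and the stated bound follows. Your remark that $T_r$ must be finite before absorbing is correct; the paper leaves it implicit.
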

\begin{proof}
	By A.4 and convexity of the spectral interval,
	$\|Q_i^{r,k}\|\le M$. Cauchy--Schwarz gives
	\[
	\|x_i^{r,k}-x^r\|^2
	\le\eta^2kM^2\sum_{j<k}\|g_i^{r,j}\|^2.
	\]
	Averaging over $k$ and using $k\le K$ yields
	\[
	T_r\le u^2M^2\mathbb E\!\left[\left.
	\frac1{SK}\sum_{i\in\mathcal S_r}\sum_{j=0}^{K-1}
	\|g_i^{r,j}\|^2\right|\mathcal F_r\right].
	\]
	A.2 and smoothness imply
	\[
	\mathbb E[\|g_i^{r,j}\|^2\mid\mathcal H_{r,i,j}]
	\le2\|\nabla f_i(x^r)\|^2
	+2L^2\|x_i^{r,j}-x^r\|^2+\sigma^2.
	\]
	Uniform sampling and A.3 imply
	$\mathbb E[S^{-1}\sum_{i\in\mathcal S_r}
	\|\nabla f_i(x^r)\|^2\mid\mathcal F_r]
	\le G_r+\sigma_g^2$. Hence
	\[
	T_r\le u^2M^2[2(G_r+\sigma_g^2)+\sigma^2+2L^2T_r].
	\]
	Since $2L^2M^2u^2\le1/2$, rearrangement proves the claim.
\end{proof}

\begin{mixlemma}[Direction bias and noise]
	\label{lem:mix:direction}
	Define
	\begin{equation}
		d^r=\frac1{SK}\sum_{i\in\mathcal S_r}\sum_{k=0}^{K-1}
		Q_i^{r,k}g_i^{r,k},\qquad x^{r+1}=x^r-ud^r.
		\label{eq:mix:direction}
	\end{equation}
	There exist $b^r,n^r$ such that
	$d^r=P^r\nabla f(x^r)+b^r+n^r$ and
	\begin{equation}
		\mathbb E[n^r\mid\mathcal F_r]=0,\qquad
		\mathbb E[\|n^r\|^2\mid\mathcal F_r]\le V,
		\qquad
		\mathbb E[\|b^r\|^2\mid\mathcal F_r]\le aG_r+b,
		\label{eq:mix:decompbounds}
	\end{equation}
	where, provided $2L^2M^2u^2\le1/2$,
	\begin{align}
		a&=8L^2M^4u^2+2\theta^2D,\notag\\
		b&=(8L^2M^4u^2+2\theta^2D)\sigma_g^2
		+4L^2M^4u^2\sigma^2.              \label{eq:mix:ab}
	\end{align}
\end{mixlemma}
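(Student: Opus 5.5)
The plan is to split $d^r$ into a conditionally mean-zero part and a systematic part, using the predictable-operator setting ($Q_i^{r,k}$ is $\mathcal H_{r,i,k}$-measurable). The current-sample-dependent case is left to the later theorem, where the $\chi$-term appears. Write $g_i^{r,k}=\nabla f_i(x_i^{r,k})+\varepsilon_i^{r,k}$ and add and subtract $\nabla f_i(x^r)$ and $P^r$. This gives $d^r=P^r\nabla f(x^r)+b^r+n^r$ with
\begin{align*}
n^r&:=P^r\Bigl(\frac1S\sum_{i\in\mathcal S_r}\nabla f_i(x^r)-\nabla f(x^r)\Bigr)+\frac1{SK}\sum_{i\in\mathcal S_r}\sum_{k=0}^{K-1}Q_i^{r,k}\varepsilon_i^{r,k},\\
b^r&:=\frac1{SK}\sum_{i\in\mathcal S_r}\sum_{k=0}^{K-1}Q_i^{r,k}\bigl(\nabla f_i(x_i^{r,k})-\nabla f_i(x^r)\bigr)+\frac1{SK}\sum_{i\in\mathcal S_r}\sum_{k=0}^{K-1}(Q_i^{r,k}-P^r)\nabla f_i(x^r).
\end{align*}
One checks directly that these pieces sum to $d^r-P^r\nabla f(x^r)$.

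\textbf{Noise term.} For the sampling piece, note that $P^r$ and $x^r$ are $\mathcal F_r$-measurable. Uniform sampling without replacement therefore gives a zero conditional mean. Its conditional second moment is at most $M^2\rho_S\cdot N^{-1}\sum_i\|\nabla f_i(x^r)-\nabla f(x^r)\|^2\le M^2\rho_S\sigma_g^2$, by the standard finite-population variance formula and A.3. For the stochastic piece, each summand $Q_i^{r,k}\varepsilon_i^{r,k}$ has zero conditional mean given $\mathcal H_{r,i,k}$, because $Q_i^{r,k}$ is predictable. Ordering all $(i,k)$ pairs into a filtration makes these summands martingale differences, so their cross terms vanish. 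Each square is bounded by $M^2\sigma^2$, since $\|Q_i^{r,k}\|\le M$ by A.4 and convexity. The sum of $SK$ terms divided by $(SK)^2$ then contributes $M^2\sigma^2/(SK)$. The cross term between the sampling piece and the stochastic piece also vanishes. The sampling piece is measurable with respect to $\mathcal F_r\vee\sigma(\mathcal S_r)$, and $\mathcal S_r$ is independent of the mini-batches. Hence $\mathbb E[n^r\mid\mathcal F_r]=0$ and $\mathbb E[\|n^r\|^2\mid\mathcal F_r]\le V$.

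\textbf{Bias term.} Use $\|b^r\|^2\le2\|b_1\|^2+2\|b_2\|^2$. For the trajectory part $b_1$, apply Jensen, $\|Q\|\le M$, and $L$-smoothness. This gives $\|b_1\|^2\le M^2L^2\frac1{SK}\sum\|x_i^{r,k}-x^r\|^2$. Taking conditional expectation and applying Lemma~\ref{lem:mix:trajectory} (valid since $2L^2M^2u^2\le1/2$) yields
\[
\mathbb E\|b_1\|^2\le 4L^2M^4u^2(G_r+\sigma_g^2)+2L^2M^4u^2\sigma^2 .
\]
For the mismatch part $b_2$, apply Cauchy--Schwarz over the index pairs:
\[
\|b_2\|^2\le\Bigl[\frac1{SK}\sum\|Q_i^{r,k}-P^r\|^2\Bigr]\Bigl[\frac1S\sum_{i\in\mathcal S_r}\|\nabla f_i(x^r)\|^2\Bigr].
\]
The first factor is at most $\theta^2D$ almost surely by Lemma~\ref{lem:mix:index}. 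The second has conditional expectation $N^{-1}\sum_i\|\nabla f_i(x^r)\|^2\le G_r+\sigma_g^2$. Because the drift bound holds almost surely, it can be pulled out of the expectation even though $Q_i^{r,k}$ and $\mathcal S_r$ are correlated. Doubling both bounds and summing reproduces exactly $a$ and $b$ in~\eqref{eq:mix:ab}.

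\textbf{Expected obstacle.} The delicate step is the noise bound, specifically justifying that all cross terms vanish. This requires the predictability of $Q_i^{r,k}$ together with a careful ordering of the filtration across clients and steps, and the independence of $\mathcal S_r$ from the mini-batches. If predictability fails, $Q_i^{r,k}\varepsilon_i^{r,k}$ acquires a nonzero mean. That bias would have to be moved into $b^r$, which is precisely the source of the $\chi\sigma^2$ term in the later theorem.
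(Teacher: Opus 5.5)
Your proposal is correct and matches the paper's proof essentially step for step. You use the same split: $b^r=A^r+B^r$ (trajectory plus operator mismatch) and $n^r=C^r+E^r$ (client sampling plus predictable-operator gradient noise). You bound these pieces the same way, by Jensen with Lemma~\ref{lem:mix:trajectory}, pathwise Cauchy--Schwarz with Lemma~\ref{lem:mix:index}, the finite-population variance identity, and martingale orthogonality, and you combine them via $\|A^r+B^r\|^2\le2\|A^r\|^2+2\|B^r\|^2$. You also correctly read the lemma as implicitly relying on operator predictability, as the paper's proof does, with the non-predictable correlation deferred to Lemma~\ref{lem:mix:generalnoise}.
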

\begin{proof}
	Set
	\begin{align*}
		A^r&=\frac1{SK}\sum_{i\in\mathcal S_r}\sum_k
		Q_i^{r,k}[\nabla f_i(x_i^{r,k})-\nabla f_i(x^r)],\\
		B^r&=\frac1{SK}\sum_{i\in\mathcal S_r}\sum_k
		(Q_i^{r,k}-P^r)\nabla f_i(x^r),\\
		C^r&=P^r\left[\frac1S\sum_{i\in\mathcal S_r}
		\nabla f_i(x^r)-\nabla f(x^r)\right],\\
		E^r&=\frac1{SK}\sum_{i\in\mathcal S_r}\sum_k
		Q_i^{r,k}\varepsilon_i^{r,k}.
	\end{align*}
	Here and below $\sum_k$ means $\sum_{k=0}^{K-1}$.
	The identity follows with $b^r=A^r+B^r$ and $n^r=C^r+E^r$.
	To see the finite-population factor, let
	$h_i=\nabla f_i(x^r)-\nabla f(x^r)$, so $\sum_i h_i=0$.
	For uniform sampling without replacement,
	$\mathbb E[\mathbf 1\{i\in\mathcal S_r\}]=S/N$ and,
	for $i\ne j$,
	$\mathbb E[\mathbf 1\{i,j\in\mathcal S_r\}]
	=S(S-1)/[N(N-1)]$. Expanding the squared sum and using
	$\sum_{i\ne j}\langle h_i,h_j\rangle=-\sum_i\|h_i\|^2$ gives
	\[
	\mathbb E\left[\left.\left\|\frac1S\sum_{i\in\mathcal S_r}h_i\right\|^2\right|\mathcal F_r\right]
	=\rho_S\frac1N\sum_i\|h_i\|^2.
	\]
	Thus
	\[
	\mathbb E[\|C^r\|^2\mid\mathcal F_r]
	\le M^2\rho_S\sigma_g^2.
	\]
	Predictability and the conditional moment bounds in A.2 imply
	that $Q_i^{r,k}\varepsilon_i^{r,k}$ are orthogonal
	martingale increments in any fixed reveal order. Specifically, for
	two distinct revealed samples $a<b$, the first increment is
	measurable immediately before $b$ is revealed, while the second
	has conditional mean zero. All cross terms therefore vanish, so
	\[
	\mathbb E[E^r\mid\mathcal F_r,\mathcal S_r]=0,
	\qquad
	\mathbb E[\|E^r\|^2\mid\mathcal F_r]
	\le\frac{M^2\sigma^2}{SK}.
	\]
	As $C^r$ is fixed conditional on $(\mathcal F_r,\mathcal S_r)$,
	its cross term with $E^r$ vanishes, proving the first two
	claims of \eqref{eq:mix:decompbounds}.
	
	Jensen's inequality, A.1, and Lemma~\ref{lem:mix:trajectory} give
	\begin{align*}
		\mathbb E[\|A^r\|^2\mid\mathcal F_r]
		&\le M^2L^2T_r\\
		&\le4L^2M^4u^2(G_r+\sigma_g^2)
		+2L^2M^4u^2\sigma^2.
	\end{align*}
	Pathwise Cauchy--Schwarz and Lemma~\ref{lem:mix:index} give
	\[
	\|B^r\|^2\le\theta^2D
	\frac1S\sum_{i\in\mathcal S_r}\|\nabla f_i(x^r)\|^2.
	\]
	Taking conditional expectation yields
	$\mathbb E[\|B^r\|^2\mid\mathcal F_r]
	\le\theta^2D(G_r+\sigma_g^2)$.
	Finally $\|A^r+B^r\|^2\le2\|A^r\|^2+2\|B^r\|^2$
	gives \eqref{eq:mix:ab}.
\end{proof}

\begin{mixtheorem}[Partial-participation stationarity bound]
	\label{thm:mix:main}
	Under A.1--A.5 and the predictable-operator implementation,
	let $a,b,V$ be defined in \eqref{eq:mix:notation} and
	\eqref{eq:mix:ab}, with $(\theta,D)$ chosen from
	\eqref{eq:mix:cases}. Suppose
	\begin{equation}
		2L^2M^2u^2\le\frac12,\qquad
		a\le\frac{m^2}{4},\qquad
		Lu(M^2+2a)\le\frac m8.                   \label{eq:mix:conditions}
	\end{equation}
	Then both algorithms obey
	\begin{equation}
		\boxed{\quad
			\frac1R\sum_{r=0}^{R-1}\mathbb E\|\nabla f(x^r)\|^2
			\le\frac{4\Delta}{muR}+\frac{2b}{m^2}
			+\frac{8Lu}{m}(b+V).\quad}        \label{eq:mix:main}
	\end{equation}
\end{mixtheorem}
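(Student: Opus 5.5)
The approach is a one-step descent inequality for $f$ along $x^{r+1}=x^r-ud^r$, using the decomposition $d^r=P^r\nabla f(x^r)+b^r+n^r$ from Lemma~\ref{lem:mix:direction}. We then telescope over rounds. By $L$-smoothness of $f$ (A.1 implies $f$ is $L$-smooth),
\[
f(x^{r+1})\le f(x^r)-u\langle\nabla f(x^r),d^r\rangle+\frac{Lu^2}{2}\|d^r\|^2 .
\]
Take $\mathbb E[\cdot\mid\mathcal F_r]$. The noise term drops out of the inner product because $\mathbb E[n^r\mid\mathcal F_r]=0$ and $\nabla f(x^r),P^r$ are $\mathcal F_r$-measurable (A.4). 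This leaves $-u\langle\nabla f(x^r),P^r\nabla f(x^r)\rangle\le -umG_r$ and the bias cross term $-u\,\mathbb E\langle\nabla f(x^r),b^r\rangle$. For the bias term we use Young's inequality in the form
\[
|\langle\nabla f,b\rangle|\le \frac m4 G_r+\frac1m\|b\|^2 .
\]
With $\mathbb E\|b^r\|^2\le aG_r+b$ and $a\le m^2/4$, the cross term costs at most $u(\tfrac m4+\tfrac m4)G_r+\tfrac{u}{m}b$. The net first-order coefficient is therefore $-\tfrac{um}{2}G_r$ plus $\tfrac um b$.

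For the quadratic term we bound
\[
\|d^r\|^2\le 2\|P^r\nabla f+b^r\|^2+2\|n^r\|^2\le 4M^2G_r+4\|b^r\|^2+2\|n^r\|^2 .
\]
Its conditional expectation is at most $(4M^2+4a)G_r+4b+2V$. Multiplied by $Lu^2/2$, this gives $2Lu^2(M^2+a)G_r+Lu^2(2b+V)$. The condition $Lu(M^2+2a)\le m/8$ makes the $G_r$ coefficient at most $um/4$. (A slightly tighter split keeping $n^r$ orthogonal to the $\mathcal F_r$-measurable part could be used, but the crude one suffices.) Combining, we get
\[
\mathbb E[f(x^{r+1})\mid\mathcal F_r]\le f(x^r)-\frac{um}{4}G_r+\frac um b+Lu^2(2b+V).
\]

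Finally, take total expectation, sum over $r=0,\dots,R-1$, use $f(x^R)\ge f_*$, and divide by $umR/4$. This yields
\[
\mathcal G_R\le\frac{4\Delta}{muR}+\frac{4b}{m^2}+\frac{4Lu}{m}(2b+V).
\]
This has the right structure but constants slightly off from the boxed $\tfrac{2b}{m^2}+\tfrac{8Lu}{m}(b+V)$. To match exactly, I would retune the Young weight to
\[
|\langle\nabla f,b\rangle|\le\frac m2\cdot\frac{G_r}{2}+\frac{1}{2}\cdot\frac{\|b\|^2}{m}\cdot 2 .
\]
I would use whatever split makes the residual $G_r$ coefficient exactly $um/4$ with $b$-coefficient $u\,b/(2m)$, aiming for $\tfrac{2b}{m^2}$ after dividing by $um/4$. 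Concretely, $|\langle g,b\rangle|\le \tfrac{m}{8}G+\tfrac{2}{m}\|b\|^2$ is the natural choice, with $a\le m^2/4$ absorbing $\tfrac{2a}{m}\le\tfrac m2$. I would then redistribute the remaining budget against the quadratic term via the third condition.

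The main obstacle is this constant bookkeeping: choosing the Young parameter and the expansion of $\|d^r\|^2$ so that all three conditions in \eqref{eq:mix:conditions} are exactly what is needed and the final constants $4,2,8$ appear. The structural argument is routine given Lemmas~\ref{lem:mix:index}--\ref{lem:mix:direction}. One measurability point also needs care: $b^r$ is not conditionally mean-zero and is correlated with $n^r$. That is why the quadratic term must bound $b^r$ and $n^r$ separately, with a factor-2 split, rather than using orthogonality.
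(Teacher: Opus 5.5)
Your skeleton is the paper's: the descent inequality along $x^{r+1}=x^r-ud^r$, the decomposition of Lemma~\ref{lem:mix:direction}, dropping $n^r$ from the conditional inner product, Young on the $b^r$ cross term, and telescoping. The bookkeeping you actually carry out, however, proves
\[
\mathcal G_R\le\frac{4\Delta}{muR}+\frac{4b}{m^2}+\frac{4Lu}{m}(2b+V),
\]
which does not imply the boxed bound. For $\sigma=0$ and $S=N$ you have $V=0$, while $b$ still contains $8L^2M^4u^2\sigma_g^2>0$, so your $4b/m^2$ exceeds $2b/m^2$ with nothing to compensate. Your retunings do not fix this. The first ``retuned'' inequality is the same $\frac m4G_r+\frac1m\|b^r\|^2$ as before, and $\frac m8G_r+\frac2m\|b^r\|^2$ only enlarges the coefficient of $b$. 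More importantly, no Young weight alone can repair it, because the real bottleneck is your split $\|d^r\|^2\le4M^2G_r+4\|b^r\|^2+2\|n^r\|^2$. Its curvature cost $2Lu^2(M^2+a)G_r$ is controlled by $Lu(M^2+2a)\le m/8$ only up to $\frac{um}{4}G_r$. With the weight-$m$ Young inequality (the one producing $\frac{b}{2m}$), the net coefficient is then $\frac{3m}{8}-\frac m4=\frac m8$, and you again get $4b/m^2$, now together with $8\Delta/(muR)$.

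The missing step is to change both choices together. First, use $|\langle\nabla f(x^r),b^r\rangle|\le\frac m2G_r+\frac1{2m}\|b^r\|^2$, which gives $\mathbb E[\langle\nabla f(x^r),d^r\rangle\mid\mathcal F_r]\ge(\frac m2-\frac a{2m})G_r-\frac b{2m}$. Second, split $\|d^r\|^2\le2\|P^r\nabla f(x^r)\|^2+4\|b^r\|^2+4\|n^r\|^2$, keeping the coefficient $2M^2$ on the gradient part, so that $\mathbb E[\|d^r\|^2\mid\mathcal F_r]\le(2M^2+4a)G_r+4b+4V$. The curvature term becomes $Lu^2(M^2+2a)G_r+2Lu^2(b+V)$. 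The bracket $\frac m2-\frac a{2m}-Lu(M^2+2a)$ is at least $\frac m4$ by exactly the second and third conditions, and telescoping gives $\frac{4\Delta}{muR}+\frac{2b}{m^2}+\frac{8Lu}{m}(b+V)$.

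The orthogonality split you dismissed also works, and is sharper. Since $P^r\nabla f(x^r)$ is $\mathcal F_r$-measurable and $\mathbb E[n^r\mid\mathcal F_r]=0$, you get $\mathbb E[\|d^r\|^2\mid\mathcal F_r]\le2\mathbb E[\|P^r\nabla f(x^r)+n^r\|^2\mid\mathcal F_r]+2\mathbb E[\|b^r\|^2\mid\mathcal F_r]\le2(M^2+a)G_r+2b+2V$. With the weight-$m$ Young inequality this yields $\frac{4\Delta}{muR}+\frac{2b}{m^2}+\frac{4Lu}{m}(b+V)$. So your remark that ``the crude one suffices'' is false for these constants. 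The correlation of $b^r$ with $n^r$ only forbids discarding the $b^r$--$n^r$ cross term, not the $P^r\nabla f(x^r)$--$n^r$ one.
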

\begin{proof}
	Fix $r$ and condition on $\mathcal F_r$. By A.1 and
	\eqref{eq:mix:direction},
	\begin{equation}
		f(x^{r+1})\le f(x^r)
		-u\langle\nabla f(x^r),d^r\rangle
		+\frac{Lu^2}{2}\|d^r\|^2.             \label{eq:mix:descent}
	\end{equation}
	Lemma~\ref{lem:mix:direction}, A.4, and Young's inequality show
	\begin{align}
		\mathbb E[\langle\nabla f(x^r),d^r\rangle\mid\mathcal F_r]
		&\ge mG_r-\mathbb E[\langle\nabla f(x^r),b^r\rangle
		\mid\mathcal F_r]\notag\\
		&\ge\left(\frac m2-\frac{a}{2m}\right)G_r
		-\frac{b}{2m}.              \label{eq:mix:inner}
	\end{align}
	The zero-mean term $n^r$ is absent only from the conditional
	inner product; the systematic term $b^r$ is retained.
	Also,
	\[
	\|d^r\|^2
	\le2\|P^r\nabla f(x^r)\|^2
	+4\|b^r\|^2+4\|n^r\|^2,
	\]
	and hence
	\begin{equation}
		\mathbb E[\|d^r\|^2\mid\mathcal F_r]
		\le(2M^2+4a)G_r+4b+4V.                \label{eq:mix:second}
	\end{equation}
	Combining \eqref{eq:mix:descent}--\eqref{eq:mix:second},
	\begin{align*}
		\mathbb E[f(x^{r+1})\mid\mathcal F_r]
		\le f(x^r)
		-u\left[\frac m2-\frac{a}{2m}-Lu(M^2+2a)\right]G_r
		+\frac{ub}{2m}+2Lu^2(b+V).
	\end{align*}
	Conditions \eqref{eq:mix:conditions} make the bracket at
	least $m/4$. Take total expectations, sum over rounds,
	and use $\mathbb E f(x^R)\ge f_*$ to obtain
	\eqref{eq:mix:main}.
\end{proof}

\begin{mixcorollary}[FedMIX-P converges to true stationarity]
	\label{cor:mix:rate}
	Fix all problem parameters, including $K$ and $S$, independently of $R$.
	For each horizon $R$, choose
	\begin{equation}
		u=\frac{c_u}{\sqrt R},\qquad
		\eta=\frac{c_u}{K\sqrt R},\qquad
		\gamma=u,\qquad
		\lambda=\min\!\left\{1,\frac{c_\lambda}{\sqrt R}\right\},
		\label{eq:mix:schedule}
	\end{equation}
	where $c_u>0$ and $c_\lambda\ge0$ are constants.
	For sufficiently large $R$, \eqref{eq:mix:conditions} holds and
	\begin{equation}
		\frac1R\sum_{r=0}^{R-1}\mathbb E\|\nabla f(x^r)\|^2
		=O(R^{-1/2}).                           \label{eq:mix:rate}
	\end{equation}
\end{mixcorollary}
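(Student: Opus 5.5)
The corollary follows by plugging the schedule \eqref{eq:mix:schedule} into Theorem~\ref{thm:mix:main} in the FedMIX-P case $(\theta,D)=(\lambda,\bar D_P)$. The argument has three parts: check that the step-size conditions hold for large $R$, bound each term of the right-hand side, and collect the rates.

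\textbf{Step 1: the conditions \eqref{eq:mix:conditions} hold for large $R$.}
With $u=c_u/\sqrt R$ and $\lambda\le c_\lambda/\sqrt R$, the expression \eqref{eq:mix:ab} gives
\[
a = 8L^2M^4c_u^2/R + 2\lambda^2\bar D_P \le \bigl(8L^2M^4c_u^2 + 2c_\lambda^2\bar D_P\bigr)/R .
\]
Since $\bar D_P<\infty$ is a fixed constant by A.5, we have $a=O(1/R)$. Each condition then holds for large $R$:
\begin{itemize}
\item $2L^2M^2u^2 = 2L^2M^2c_u^2/R \le 1/2$ once $R\ge 4L^2M^2c_u^2$;
\item $a\le m^2/4$ because $a\to0$;
\item $Lu(M^2+2a)\le m/8$ because $Lu(M^2+2a) = O(R^{-1/2})$.
\end{itemize}
So there is an $R_0$, depending only on $L,m,M,\bar D_P,c_u,c_\lambda$, beyond which \eqref{eq:mix:conditions} holds. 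The case $c_\lambda=0$ is included; it is the shared-only method with $\lambda=0$. For large $R$ the minimum in the definition of $\lambda$ is attained by $c_\lambda/\sqrt R$, so the cap at $1$ plays no role.

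\textbf{Step 2: bound each term of \eqref{eq:mix:main}.}
\begin{itemize}
\item Optimization term: $4\Delta/(muR) = 4\Delta/(mc_u\sqrt R) = O(R^{-1/2})$.
\item Heterogeneity term: by \eqref{eq:mix:ab},
\[
b = (8L^2M^4u^2 + 2\lambda^2\bar D_P)\sigma_g^2 + 4L^2M^4u^2\sigma^2 = O(1/R),
\]
so $2b/m^2 = O(R^{-1})$.
\item Sampling term: $V$ in \eqref{eq:mix:notation} does not depend on $R$, since $K$ and $S$ are fixed. Hence
\[
\frac{8Lu}{m}(b+V) = O(R^{-1/2})\cdot O(1) = O(R^{-1/2}).
\]
\end{itemize}
Summing the three bounds gives \eqref{eq:mix:rate}.

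\textbf{The one delicate point: the $2b/m^2$ term.}
This term has no factor of $u$, and it is exactly the term that fails to vanish for FedPO, or for any fixed $\lambda>0$. It vanishes here only because two things happen together. The factor $\lambda^2$ in $b$ comes from the exact identity $Q_i^{r,k}-P^r=\lambda(P_i^{r,k}-P^r)$ proved in Lemma~\ref{lem:mix:index}. And $\lambda$ itself decays like $R^{-1/2}$. I would state explicitly that $\bar D_P$ is only required to be bounded, not to vanish: the decay comes entirely from $\lambda^2$.

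Finally, the constant hidden in the $O(\cdot)$ should be recorded. It is
\[
\frac{4\Delta}{mc_u} + \frac{8Lc_u}{m}\,M^2\Bigl(\frac{\sigma^2}{SK} + \rho_S\sigma_g^2\Bigr) + o(1),
\]
which makes the dependence on the problem parameters explicit.
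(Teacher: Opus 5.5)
Your proposal is correct and follows the paper's own route. Like the paper, you specialize Theorem~\ref{thm:mix:main} to $(\theta,D)=(\lambda,\bar D_P)$, note that $a,b=O(R^{-1})$ while $V$ is independent of $R$, and conclude that the three terms are $O(R^{-1/2})$, $O(R^{-1})$ and $O(R^{-1/2})$, with \eqref{eq:mix:conditions} holding for large $R$; you additionally spell out the explicit thresholds and the leading constant that the paper leaves implicit.
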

\begin{proof}
	For FedMIX-P, $(\theta,D)=(\lambda,\bar D_P)$, so
	$a=O(R^{-1})$, $b=O(R^{-1})$, and $V$ is constant
	with respect to $R$. The three terms in \eqref{eq:mix:main}
	are $O(R^{-1/2})$, $O(R^{-1})$, and $O(R^{-1/2})$.
	All three stepsize conditions hold for sufficiently large $R$.
\end{proof}

\subsection{Current-sample-dependent preconditioners}
\label{app:mix:nonpredictable}
The five assumptions permit $P_i^{r,k}$ to use the same mini-batch
that produces $g_i^{r,k}$. In that case
$\mathbb E[P_i^{r,k}\varepsilon_i^{r,k}\mid
\mathcal H_{r,i,k}]$ need not be zero. The following theorem
keeps this correlation rather than applying a false
martingale-difference argument.

\begin{mixlemma}[Noise split without operator predictability]
	\label{lem:mix:generalnoise}
	Under A.1--A.5, without the predictable-operator condition,
	write $d^r=P^r\nabla f(x^r)+\widetilde b^r+widetilde n^r$
	for either algorithm, where
	\begin{align}
		\widetilde n^r
		&=C^r+\frac1{SK}\sum_{i\in\mathcal S_r}\sum_k
		P^r\varepsilon_i^{r,k},\notag\\
		\widetilde b^r
		&=A^r+B^r+\frac1{SK}\sum_{i\in\mathcal S_r}\sum_k
		(Q_i^{r,k}-P^r)\varepsilon_i^{r,k}.
		\label{eq:mix:generalnoise}
	\end{align}
	Then $\mathbb E[\widetilde n^r\mid\mathcal F_r]=0$ and
	$\mathbb E[\|\widetilde n^r\|^2\mid\mathcal F_r]\le V$.
	If $2L^2M^2u^2\le1/2$, then
	\begin{equation}
		\mathbb E[\|\widetilde b^r\|^2\mid\mathcal F_r]
		\le a_1G_r+b_1,                          \label{eq:mix:generalbias}
	\end{equation}
	where
	\begin{align}
		a_1&=12L^2M^4u^2+3\theta^2D,\notag\\
		b_1&=(12L^2M^4u^2+3\theta^2D)\sigma_g^2
		+(6L^2M^4u^2+3\theta^2D)\sigma^2.
		\label{eq:mix:generalab}
	\end{align}
\end{mixlemma}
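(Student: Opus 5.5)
The plan is to reuse the decomposition and estimates from Lemma~\ref{lem:mix:direction}, keeping track of where predictability was actually used. First I will verify the identity $d^r=P^r\nabla f(x^r)+\widetilde b^r+\widetilde n^r$. Write $Q_i^{r,k}g_i^{r,k}=Q_i^{r,k}\nabla f_i(x_i^{r,k})+P^r\varepsilon_i^{r,k}+(Q_i^{r,k}-P^r)\varepsilon_i^{r,k}$. The first piece splits exactly as before into $A^r+B^r+C^r+P^r\nabla f(x^r)$. Averaging the other two pieces gives the stated expressions.

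\textbf{The noise term $\widetilde n^r$.} Here the operator $P^r$ is $\mathcal F_r$-measurable by A.4, so it is predictable for every sample. Hence the martingale-increment argument of Lemma~\ref{lem:mix:direction} applies verbatim to $\frac1{SK}\sum P^r\varepsilon_i^{r,k}$. Its conditional mean is zero given $(\mathcal F_r,\mathcal S_r)$, and its second moment is at most $M^2\sigma^2/(SK)$. The term $C^r$ is unchanged, with bound $M^2\rho_S\sigma_g^2$. Its cross term vanishes because $C^r$ is fixed given $(\mathcal F_r,\mathcal S_r)$. Together these give $\mathbb E[\widetilde n^r\mid\mathcal F_r]=0$ and second moment at most $V$.

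\textbf{The bias term $\widetilde b^r$.} I will use $\|A^r+B^r+W^r\|^2\le3\|A^r\|^2+3\|B^r\|^2+3\|W^r\|^2$, where $W^r$ is the correlated noise term.
\begin{itemize}
\item For $A^r$, the bound $M^2L^2T_r$ still holds pathwise via Jensen and $\|Q\|\le M$. Lemma~\ref{lem:mix:trajectory} needs care: its step $\mathbb E\|g_i^{r,j}\|^2\le2\|\nabla f_i(x^r)\|^2+2L^2\|x_i^{r,j}-x^r\|^2+\sigma^2$ uses only unbiasedness and variance of $\varepsilon$, not predictability of $Q$. So it carries over, and $3\|A^r\|^2$ contributes $12L^2M^4u^2(G_r+\sigma_g^2)+6L^2M^4u^2\sigma^2$.
\item For $B^r$, I reuse the pathwise Cauchy--Schwarz bound with Lemma~\ref{lem:mix:index}, giving $3\theta^2D(G_r+\sigma_g^2)$.
\item For $W^r$, pathwise Cauchy--Schwarz over the $SK$ summands gives $\|W^r\|^2\le\bigl(\frac1{SK}\sum\|Q_i^{r,k}-P^r\|^2\bigr)\bigl(\frac1{SK}\sum\|\varepsilon_i^{r,k}\|^2\bigr)$.
\end{itemize}

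\textbf{Main obstacle.} The delicate step is taking the conditional expectation of the product bound for $W^r$. The first factor is at most $\theta^2D$ almost surely by Lemma~\ref{lem:mix:index}, a deterministic constant. Pulling it out leaves $\mathbb E[\|\varepsilon_i^{r,k}\|^2\mid\mathcal F_r]\le\sigma^2$, by the tower property through $\mathcal H_{r,i,k}$ and independence of the sampling from the mini-batches. This yields $3\theta^2D\sigma^2$. This is exactly where the almost-sure, deterministic form of A.5 is essential: an in-expectation drift bound would not decouple from the noise. Summing the three contributions gives exactly $a_1$ and $b_1$.
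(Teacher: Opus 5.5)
Your proposal is correct and matches the paper's proof essentially step for step. It uses the same decomposition, placing the $P^r$-premultiplied noise in $\widetilde n^r$ and handling it by the martingale argument because $P^r$ is $\mathcal F_r$-measurable, the same bound $\|A^r+B^r+W^r\|^2\le 3(\|A^r\|^2+\|B^r\|^2+\|W^r\|^2)$, and the same pathwise Cauchy--Schwarz estimate $\|W^r\|^2\le\theta^2D\,\frac1{SK}\sum_{i,k}\|\varepsilon_i^{r,k}\|^2$ via Lemma~\ref{lem:mix:index}, together with the paper's observations that the trajectory bound needs only the conditional moments of $\varepsilon_i^{r,k}$ and that the almost-sure drift bound lets the drift factor decouple from the noise without any independence between $Q_i^{r,k}$ and $\varepsilon_i^{r,k}$.
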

\begin{proof}
	The noise in the first line of \eqref{eq:mix:generalnoise}
	is multiplied by pre-round-measurable $P^r$. Martingale
	orthogonality bounds its variance by $M^2\sigma^2/(SK)$.
	Its cross term with $C^r$ vanishes conditional on
	$(\mathcal F_r,\mathcal S_r)$.
	
	Let $W^r=(SK)^{-1}\sum_{i,k}(Q_i^{r,k}-P^r)
	\varepsilon_i^{r,k}$. Pathwise Cauchy--Schwarz and
	Lemma~\ref{lem:mix:index} give
	\[
	\|W^r\|^2\le\theta^2D
	\frac1{SK}\sum_{i,k}\|\varepsilon_i^{r,k}\|^2.
	\]
	Taking conditional expectations and using A.2 yields
	$\mathbb E[\|W^r\|^2\mid\mathcal F_r]
	\le\theta^2D\sigma^2$; no conditional independence of
	$Q_i^{r,k}$ and $\varepsilon_i^{r,k}$ is used.
	The trajectory, sampling, and drift estimates for $A^r,B^r$
	remain valid because they only require the second moment of
	$g_i^{r,k}$ and the spectral bounds. Finally,
	$\|A^r+B^r+W^r\|^2\le
	3(\|A^r\|^2+\|B^r\|^2+\|W^r\|^2)$
	gives \eqref{eq:mix:generalab}.
\end{proof}

\begin{mixtheorem}[FedMIX-P under A.1--A.5 alone]
	\label{thm:mix:general}
	For FedMIX-P let $\theta=\lambda$, $D=\bar D_P$, and
	let $a_1,b_1$ be defined by \eqref{eq:mix:generalab}.
	If
	\begin{equation}
		2L^2M^2u^2\le\tfrac12,\quad
		a_1\le\tfrac{m^2}{4},\quad
		Lu(M^2+2a_1)\le\tfrac m8,                \label{eq:mix:generalconditions}
	\end{equation}
	then
	\begin{equation}
		\frac1R\sum_{r=0}^{R-1}\mathbb E\|\nabla f(x^r)\|^2
		\le\frac{4\Delta}{muR}+\frac{2b_1}{m^2}
		+\frac{8Lu}{m}(b_1+V).      \label{eq:mix:generalbound}
	\end{equation}
	Consequently, the schedule \eqref{eq:mix:schedule} yields
	$O(R^{-1/2})$ expected stationarity without assuming
	operator predictability.
\end{mixtheorem}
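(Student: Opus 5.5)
The argument repeats the proof of Theorem~\ref{thm:mix:main} line by line, with Lemma~\ref{lem:mix:generalnoise} in place of Lemma~\ref{lem:mix:direction}. Fix $r$ and condition on $\mathcal F_r$. Write $d^r=P^r\nabla f(x^r)+\widetilde b^r+\widetilde n^r$, so that $x^{r+1}=x^r-ud^r$, and apply the $L$-smooth descent inequality \eqref{eq:mix:descent}. The only place predictability entered the earlier proof was the claim that the noise part of $d^r$ has zero conditional mean and variance at most $V$. Lemma~\ref{lem:mix:generalnoise} supplies exactly this for $\widetilde n^r$, whose stochastic part is multiplied by the $\mathcal F_r$-measurable $P^r$. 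The sample-correlated piece $W^r=(SK)^{-1}\sum_{i,k}(Q_i^{r,k}-P^r)\varepsilon_i^{r,k}$ has been moved into the bias, so it is never required to have zero mean. The condition $2L^2M^2u^2\le\frac12$ in \eqref{eq:mix:generalconditions} gives $\mathbb E[\|\widetilde b^r\|^2\mid\mathcal F_r]\le a_1G_r+b_1$.

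\textbf{Inner product.} Since $\mathbb E[\widetilde n^r\mid\mathcal F_r]=0$ and $\nabla f(x^r)$ is $\mathcal F_r$-measurable, we have $\mathbb E\langle\nabla f(x^r),\widetilde n^r\rangle=0$. A.4 gives $\langle\nabla f(x^r),P^r\nabla f(x^r)\rangle\ge mG_r$. Young's inequality with weight $m$, followed by Jensen's inequality for the conditional second moment, gives
$|\mathbb E\langle\nabla f(x^r),\widetilde b^r\rangle|\le\frac m2G_r+\frac1{2m}(a_1G_r+b_1)$.
Hence the conditional inner product is at least $(\frac m2-\frac{a_1}{2m})G_r-\frac{b_1}{2m}$.

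\textbf{Second moment.} From $\|d^r\|^2\le2\|P^r\nabla f(x^r)\|^2+4\|\widetilde b^r\|^2+4\|\widetilde n^r\|^2$ we get
$\mathbb E[\|d^r\|^2\mid\mathcal F_r]\le(2M^2+4a_1)G_r+4b_1+4V$.

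\textbf{Summation.} Substituting both estimates into the descent inequality gives a coefficient on $-uG_r$ of $\frac m2-\frac{a_1}{2m}-Lu(M^2+2a_1)$. The conditions $a_1\le m^2/4$ and $Lu(M^2+2a_1)\le m/8$ bound it below by $\frac m2-\frac m8-\frac m8=\frac m4$. Take total expectations, telescope over $r=0,\dots,R-1$, use $\mathbb Ef(x^R)\ge f_*$, and divide by $muR/4$. The additive terms $\frac{ub_1}{2m}+2Lu^2(b_1+V)$ become $\frac{2b_1}{m^2}+\frac{8Lu}{m}(b_1+V)$, which is \eqref{eq:mix:generalbound}.

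\textbf{Rate.} Under the schedule \eqref{eq:mix:schedule}, $u=c_u/\sqrt R$ and $\lambda^2\le c_\lambda^2/R$, so $a_1=O(R^{-1})$ and $b_1=O(R^{-1})$ because every term of $b_1$ carries $u^2$ or $\lambda^2$. Consequently all three conditions in \eqref{eq:mix:generalconditions} hold for large $R$, and the bound becomes $O(R^{-1/2})+O(R^{-1})+O(R^{-1/2})$.

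The step that needs care is the bias bound borrowed from Lemma~\ref{lem:mix:generalnoise}. Its term $\theta^2D\sigma^2$ relies on the pathwise Cauchy--Schwarz bound $\|W^r\|^2\le\theta^2D\cdot\frac1{SK}\sum\|\varepsilon_i^{r,k}\|^2$, with $\bar D_P$ deterministic, together with the conditional second-moment bound in A.2. The trajectory bound of Lemma~\ref{lem:mix:trajectory} also has to be rechecked, since its bound on $\mathbb E\|g\|^2$ used only A.2 moments and never predictability. Both hold, so the theorem follows.
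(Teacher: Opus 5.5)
Your proposal is correct and follows the paper's proof. Like the paper, you rerun the descent calculation of Theorem~\ref{thm:mix:main} on the decomposition of Lemma~\ref{lem:mix:generalnoise}, using only $P^r\succeq mI$, the zero-mean noise $\widetilde n^r$ with second moment at most $V$, and the bias bound $a_1G_r+b_1$, then telescope and apply the schedule with $a_1,b_1=O(R^{-1})$. Your two explicit rechecks match the justifications the paper gives in Lemma~\ref{lem:mix:generalnoise}: that the trajectory lemma never uses predictability, and that $W^r$ is controlled by a pathwise Cauchy--Schwarz bound.
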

\begin{proof}
	Apply A.1 to $x^{r+1}=x^r-ud^r$. The decomposition in
	Lemma~\ref{lem:mix:generalnoise} has the same three properties
	used in \eqref{eq:mix:inner}--\eqref{eq:mix:second}:
	$P^r\succeq mI$, $\widetilde n^r$ has zero conditional
	mean and second moment at most $V$, and $\widetilde b^r$
	has second moment at most $a_1G_r+b_1$. Thus
	\[
	\mathbb E[f(x^{r+1})\mid\mathcal F_r]
	\le f(x^r)-u\left[\frac m2-\frac{a_1}{2m}
	-Lu(M^2+2a_1)\right]G_r
	+\frac{ub_1}{2m}+2Lu^2(b_1+V).
	\]
	By \eqref{eq:mix:generalconditions}, the bracket is at
	least $m/4$. Sum over rounds and divide by $muR/4$ to
	obtain \eqref{eq:mix:generalbound}. Under
	\eqref{eq:mix:schedule}, $a_1,b_1=O(R^{-1})$ and
	$V=O(1)$, so the result follows.
\end{proof}

\paragraph{FedPO comparison and interpretation.}
For FedPO, a sharper argument avoids requiring the raw drift
$D_0:=\bar D_P+(M-m)^2/K$ to be small. Define
\begin{equation}
	\overline P^r:=\frac1{SK}\sum_{i\in\mathcal S_r}
	\sum_{k=0}^{K-1}P_i^{r,k}.
	\label{eq:mix:averageP}
\end{equation}
The random matrix $\overline P^r$ satisfies
$mI\preceq\overline P^r\preceq MI$ pathwise. Unlike $P^r$,
this matrix is not required to be pre-round measurable.

\begin{mixtheorem}[FedPO with arbitrary bounded preconditioner drift]
	\label{thm:mix:fedpo_general}
	Under A.1--A.5 and predictable applied operators, define
	\begin{align}
		a_0&:=8L^2M^4u^2,\notag\\
		b_0&:=(8L^2M^4u^2+2D_0)\sigma_g^2
		+4L^2M^4u^2\sigma^2.       \label{eq:mix:fedpo_ab}
	\end{align}
	If
	\begin{equation}
		2L^2M^2u^2\le\tfrac12,\qquad
		a_0\le\tfrac{m^2}{4},\qquad
		Lu(M^2+2a_0)\le\tfrac m8,               \label{eq:mix:fedpo_conditions}
	\end{equation}
	then
	\begin{equation}
		\frac1R\sum_{r=0}^{R-1}\mathbb E\|\nabla f(x^r)\|^2
		\le\frac{4\Delta}{muR}+\frac{2b_0}{m^2}
		+\frac{8Lu}{m}(b_0+V).         \label{eq:mix:fedpo_general}
	\end{equation}
	The stepsize restrictions do not depend on $D_0$.
\end{mixtheorem}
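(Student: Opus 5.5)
The plan is to rerun the proof of Theorem~\ref{thm:mix:main} with one change. The main descent direction will be built around the random average operator $\overline P^r$ of \eqref{eq:mix:averageP}, not around the pre-round reference $P^r$. That way the preconditioner mismatch multiplies only the heterogeneity vectors $h_i=\nabla f_i(x^r)-\nabla f(x^r)$ and never multiplies $\nabla f(x^r)$ itself. This is why no $D_0$ should appear in $a_0$ or in the stepsize conditions.

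\textbf{Decomposition.} Write $g_i^{r,k}=\nabla f_i(x^r)+[\nabla f_i(x_i^{r,k})-\nabla f_i(x^r)]+\varepsilon_i^{r,k}$ and $\nabla f_i(x^r)=\nabla f(x^r)+h_i$. With $Q_i^{r,k}=P_i^{r,k}$ this gives the exact identity
\[
d^r=\overline P^r\nabla f(x^r)+A^r+B_0^r+C^r+E^r,
\qquad
B_0^r:=\frac1{SK}\sum_{i\in\mathcal S_r}\sum_k(P_i^{r,k}-P^r)h_i .
\]
Here $A^r,C^r,E^r$ are exactly as in Lemma~\ref{lem:mix:direction}. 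The identity rests on splitting $\frac1{SK}\sum_{i,k}P_i^{r,k}h_i$ into $B_0^r$ plus $P^r\frac1S\sum_{i\in\mathcal S_r}h_i$, and the latter is precisely $C^r$.

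\textbf{Bounds on each piece.} Set $n^r=C^r+E^r$ and $b^r=A^r+B_0^r$. The noise $n^r$ is the same as before, so Lemma~\ref{lem:mix:direction} applies verbatim (predictable operators): $\mathbb E[n^r\mid\mathcal F_r]=0$ and $\mathbb E[\|n^r\|^2\mid\mathcal F_r]\le V$. The term $A^r$ keeps its bound $4L^2M^4u^2(G_r+\sigma_g^2)+2L^2M^4u^2\sigma^2$ from Lemma~\ref{lem:mix:trajectory}. For $B_0^r$, pathwise Cauchy--Schwarz together with Lemma~\ref{lem:mix:index} gives $\|B_0^r\|^2\le d_{r,-}\frac1S\sum_{i\in\mathcal S_r}\|h_i\|^2\le D_0\frac1S\sum_{i\in\mathcal S_r}\|h_i\|^2$. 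Uniform sampling and A.3 then give $\mathbb E[\|B_0^r\|^2\mid\mathcal F_r]\le D_0\sigma_g^2$, with no $G_r$ term. Combining via $\|b^r\|^2\le2\|A^r\|^2+2\|B_0^r\|^2$ yields $\mathbb E[\|b^r\|^2\mid\mathcal F_r]\le a_0G_r+b_0$, which are exactly the constants in \eqref{eq:mix:fedpo_ab}.

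\textbf{Descent step.} Start from \eqref{eq:mix:descent}. Since $\nabla f(x^r)$ is $\mathcal F_r$-measurable, $\mathbb E\langle\nabla f(x^r),n^r\rangle=0$. Because $mI\preceq\overline P^r\preceq MI$ holds pathwise, $\langle\nabla f(x^r),\overline P^r\nabla f(x^r)\rangle\ge mG_r$ and $\|\overline P^r\nabla f(x^r)\|^2\le M^2G_r$, even though $\overline P^r$ is random and correlated with the sampling and the mini-batches. Young's inequality on the $b^r$ term then reproduces \eqref{eq:mix:inner} and \eqref{eq:mix:second} with $(a_0,b_0)$ in place of $(a,b)$. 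Under \eqref{eq:mix:fedpo_conditions} the bracket is at least $m/4$. Telescoping over rounds gives \eqref{eq:mix:fedpo_general}, and none of the conditions involve $D_0$.

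\textbf{Main obstacle.} The delicate step is the bookkeeping in the decomposition. The $P^r\sum h_i$ part must be routed into the zero-mean sampling noise $C^r$, so that only $(P_i^{r,k}-P^r)h_i$ remains as bias. A naive centering at $\overline P^r$ would instead cost a factor of $4$ from the triangle inequality, or would create a random operator multiplying the sampling noise. I will also check that the random $\overline P^r$ never enters a term whose mean-zero property is needed; it enters only through pathwise spectral bounds against the deterministic vector $\nabla f(x^r)$.
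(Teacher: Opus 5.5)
Your proposal is correct and follows the paper's proof essentially step for step. Your $B_0^r$ is exactly the paper's $Z^r$, the noise $C^r+E^r$ stays centered at the pre-round reference $P^r$, and the random average $\overline P^r$ enters only through the pathwise bounds $\langle\nabla f(x^r),\overline P^r\nabla f(x^r)\rangle\ge mG_r$ and $\|\overline P^r\nabla f(x^r)\|\le M\sqrt{G_r}$ in the descent argument of Theorem~\ref{thm:mix:main}, which yields the constants $a_0,b_0$ and the $D_0$-free stepsize conditions.
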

\begin{proof}
	Use the exact decomposition
	\begin{equation}
		d^r=\overline P^r\nabla f(x^r)+A^r+Z^r+C^r+E^r,
		\label{eq:mix:fedpo_decomp}
	\end{equation}
	where $A^r,C^r,E^r$ are defined in the proof of
	Lemma~\ref{lem:mix:direction} and
	\[
	Z^r=\frac1{SK}\sum_{i\in\mathcal S_r}\sum_{k=0}^{K-1}
	(P_i^{r,k}-P^r)
	[\nabla f_i(x^r)-\nabla f(x^r)].
	\]
	To verify \eqref{eq:mix:fedpo_decomp}, add and subtract the
	sample mean of the gradients at $x^r$. In particular,
	$P^r\nabla f(x^r)+B^r
	=\overline P^r\nabla f(x^r)+Z^r$.
	Lemma~\ref{lem:mix:index}, Cauchy--Schwarz, and uniform
	sampling give
	\[
	\mathbb E[\|Z^r\|^2\mid\mathcal F_r]
	\le D_0\mathbb E\!\left[\left.
	\frac1S\sum_{i\in\mathcal S_r}
	\|\nabla f_i(x^r)-\nabla f(x^r)\|^2
	\right|\mathcal F_r\right]
	\le D_0\sigma_g^2.
	\]
	The trajectory bound gives
	\[
	\mathbb E[\|A^r\|^2\mid\mathcal F_r]
	\le4L^2M^4u^2(G_r+\sigma_g^2)
	+2L^2M^4u^2\sigma^2.
	\]
	Thus the bias $A^r+Z^r$ has conditional second moment
	at most $a_0G_r+b_0$. The noise $C^r+E^r$ has conditional
	mean zero and second moment at most $V$, as before.
	Although $\overline P^r$ may depend on the current random
	samples, the inequality
	$\langle\nabla f(x^r),\overline P^r\nabla f(x^r)\rangle
	\ge mG_r$ holds pathwise. Also
	$\|\overline P^r\nabla f(x^r)\|\le M\sqrt{G_r}$
	pathwise. Therefore the proof of
	Theorem~\ref{thm:mix:main} applies with $(a,b)$ replaced
	by $(a_0,b_0)$, yielding \eqref{eq:mix:fedpo_general}.
\end{proof}

\begin{mixcorollary}[FedPO with current-sample-dependent operators]
	\label{cor:mix:fedpo_generalnoise}
	The FedPO comparison also holds under A.1--A.5 without
	operator predictability. Define
	\begin{align}
		a_{0,1}&=12L^2M^4u^2,\notag\\
		b_{0,1}&=(12L^2M^4u^2+3D_0)\sigma_g^2
		+(6L^2M^4u^2+3D_0)\sigma^2.
		\label{eq:mix:fedpo_generalab}
	\end{align}
	Under the three restrictions in
	\eqref{eq:mix:fedpo_conditions} with $a_0$ replaced by
	$a_{0,1}$,
	\begin{equation}
		\frac1R\sum_{r=0}^{R-1}\mathbb E\|\nabla f(x^r)\|^2
		\le\frac{4\Delta}{muR}+\frac{2b_{0,1}}{m^2}
		+\frac{8Lu}{m}(b_{0,1}+V).
		\label{eq:mix:fedpo_generalnoise_bound}
	\end{equation}
\end{mixcorollary}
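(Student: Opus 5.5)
We combine the two earlier arguments: the averaged-operator decomposition from the proof of Theorem~\ref{thm:mix:fedpo_general}, and the noise split from Lemma~\ref{lem:mix:generalnoise}. The aim is to write
\[
d^r=\overline P^r\nabla f(x^r)+A^r+Z^r+W^r+\widetilde n^r,
\]
where $A^r$ and $Z^r$ are as in the proof of Theorem~\ref{thm:mix:fedpo_general}, $\widetilde n^r=C^r+(SK)^{-1}\sum_{i,k}P^r\varepsilon_i^{r,k}$, and $W^r=(SK)^{-1}\sum_{i,k}(P_i^{r,k}-P^r)\varepsilon_i^{r,k}$ (for FedPO $Q_i^{r,k}=P_i^{r,k}$). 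To check the identity, expand $Q_i^{r,k}g_i^{r,k}$ as
\[
Q_i^{r,k}\bigl[\nabla f_i(x_i^{r,k})-\nabla f_i(x^r)\bigr]+Q_i^{r,k}\nabla f_i(x^r)+P^r\varepsilon_i^{r,k}+(Q_i^{r,k}-P^r)\varepsilon_i^{r,k}.
\]
Then use the identity $P^r\nabla f(x^r)+B^r+C^r=\overline P^r\nabla f(x^r)+Z^r+C^r$, which is already verified in that proof.

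\textbf{Noise term.} $\widetilde n^r$ has conditional mean zero and second moment at most $V$ by Lemma~\ref{lem:mix:generalnoise}. That bound only uses that $P^r$ is $\mathcal F_r$-measurable, so it does not depend on predictability of the applied operators.

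\textbf{Bias term.} We bound $\widetilde b^r:=A^r+Z^r+W^r$ with $\|\widetilde b^r\|^2\le3(\|A^r\|^2+\|Z^r\|^2+\|W^r\|^2)$.
\begin{itemize}
\item $A^r$: Lemma~\ref{lem:mix:trajectory} still applies, because it uses only the spectral bound $\|Q_i^{r,k}\|\le M$ and second moments of $g_i^{r,k}$. This gives $\mathbb E\|A^r\|^2\le4L^2M^4u^2(G_r+\sigma_g^2)+2L^2M^4u^2\sigma^2$.
\item $Z^r$: pathwise Cauchy--Schwarz with Lemma~\ref{lem:mix:index} ($\theta=1$, $D=D_0$), followed by uniform sampling, gives $\mathbb E\|Z^r\|^2\le D_0\sigma_g^2$. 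This involves no gradient-norm term.
\item $W^r$: pathwise Cauchy--Schwarz as in Lemma~\ref{lem:mix:generalnoise}, together with the conditional second-moment bound in A.2, gives $\mathbb E\|W^r\|^2\le D_0\sigma^2$. No independence between operator and noise is needed.
\end{itemize}
Multiplying by $3$ yields $\mathbb E[\|\widetilde b^r\|^2\mid\mathcal F_r]\le a_{0,1}G_r+b_{0,1}$. The constants come out at or below those in \eqref{eq:mix:fedpo_generalab}.

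\textbf{Descent step.} We rerun the proof of Theorem~\ref{thm:mix:main}. We need $\langle\nabla f,\overline P^r\nabla f\rangle\ge mG_r$ and $\|\overline P^r\nabla f\|^2\le M^2G_r$; both hold pathwise, as noted before Theorem~\ref{thm:mix:fedpo_general}.

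\textbf{Main obstacle.} $\overline P^r$ now depends on the current samples, so the cross term $\mathbb E\langle\nabla f(x^r),\widetilde n^r\rangle$ must still vanish. It does, because $\nabla f(x^r)$ is $\mathcal F_r$-measurable and $\widetilde n^r$ has zero conditional mean. The product $\overline P^r\nabla f$ only enters through the pathwise bounds, and never through its expectation.

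Young's inequality then gives inner product $\ge(m/2-a_{0,1}/(2m))G_r-b_{0,1}/(2m)$ and $\mathbb E\|d^r\|^2\le(2M^2+4a_{0,1})G_r+4b_{0,1}+4V$. Under the stated restrictions, the coefficient of $G_r$ is at least $m/4$. Telescoping over rounds gives \eqref{eq:mix:fedpo_generalnoise_bound}.
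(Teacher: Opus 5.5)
Your proposal is correct and matches the paper's proof essentially step for step. Like the paper, you use the averaged-operator decomposition from Theorem~\ref{thm:mix:fedpo_general} and split the noise into a $P^r\varepsilon_i^{r,k}$ part, which is absorbed into $V$, and a drift part $W^r$, which pathwise Cauchy--Schwarz bounds by $D_0\sigma^2$; you then bound $A^r+Z^r+W^r$ with the factor three and run the descent argument using only pathwise spectral bounds on $\overline P^r$. Your constants come out exactly equal to $a_{0,1}$ and $b_{0,1}$, not merely below them.
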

\begin{proof}
	In \eqref{eq:mix:fedpo_decomp}, split the gradient-noise
	term as $P^r\varepsilon_i^{r,k}+
	(P_i^{r,k}-P^r)\varepsilon_i^{r,k}$.
	The first part has zero conditional mean and contributes to
	$V$ as in Lemma~\ref{lem:mix:generalnoise}. Denote the
	average of the second part by $W^r$. The pathwise drift
	bound gives
	$\mathbb E[\|W^r\|^2\mid\mathcal F_r]
	\le D_0\sigma^2$ by the same Cauchy--Schwarz calculation.
	The bias is now $A^r+Z^r+W^r$. Bounding its squared norm
	by three times the sum of squared norms gives
	$a_{0,1}G_r+b_{0,1}$. The matrix $\overline P^r$
	remains positive definite pathwise. Applying the descent
	calculation \eqref{eq:mix:descent}--\eqref{eq:mix:second}
	with these new constants completes the proof.
\end{proof}

With $u=\Theta(R^{-1/2})$, the bound
\eqref{eq:mix:fedpo_general} has a potentially nonvanishing
$O(D_0\sigma_g^2/m^2)$ term. The theorem is an upper bound:
it neither asserts that every FedPO run has a positive error
floor nor proves that FedMIX-P is faster on every instance.

\begin{mixproposition}[A fixed mixing weight can preserve a bias]
	\label{prop:mix:counterexample}
	For each fixed $\lambda>0$, there is a deterministic,
	full-participation two-client instance satisfying A.1--A.5 and
	using precisely the FedMIX-P update and preconditioner
	aggregation, for which a nonstationary fixed point exists.
\end{mixproposition}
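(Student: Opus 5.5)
The plan is to build the smallest instance in which the local preconditioner state actually influences an applied operator. I will use the scalar setting $d=1$ with deterministic gradients ($\sigma=0$), full participation $S=N=2$, and $K=2$ local steps.

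With $K=1$ the only applied operator would be $P_i^{r,0}=P^r$, and mixing would play no role. With $K=2$, the second step applies
\[
H_i^{r,1}=(1-\lambda)P^r+\lambda P_i^{r,1},
\]
which carries client-specific geometry.

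\textbf{Instance.} Take
\[
f_i(x)=\tfrac12x^2+c_ix,\qquad c_1=c,\quad c_2=-c,\quad c\neq0.
\]
Then $f(x)=\tfrac12x^2$, its only stationary point is $x=0$, and A.1 and A.3 hold with $L=1$ and $\sigma_g^2=c^2$. For the preconditioners, pick two constants $m\le p_2<p_1\le M$ and use the admissible state update $\operatorname{Update\_Preconditioner}(P_i^{r,k},g)=p_i$. This update is predictable, because $P_i^{r,1}$ is fixed before the step-$1$ sample is drawn. Initialise
\[
P^0=\bar p:=\tfrac12(p_1+p_2).
\]
The server rule \eqref{eq:fedmix-reference-update} then returns $P^{r+1}=\tfrac12(p_1+p_2)=\bar p$, so $P^r\equiv\bar p$ for all $r$. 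A.4 holds trivially. A.5 holds with $\bar D_P=\tfrac12\delta^2$, where $\delta:=\tfrac12(p_1-p_2)$.

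\textbf{One-round map.} Starting from $x^r=x$, client $i$ computes
\[
x_i^{r,1}=x-\eta\bar p(x+c_i),\qquad x_i^{r,1}+c_i=(1-\eta\bar p)(x+c_i).
\]
Its second step applies $h_i:=(1-\lambda)\bar p+\lambda p_i$. Averaging the two clients with $\gamma=K\eta$ (plain model averaging), and using $\sum_ic_i=0$, $\tfrac12\sum_ih_i=\bar p$ and $\tfrac12\sum_ih_ic_i=\lambda c\delta$, gives the affine map
\[
x^{r+1}=x-\eta\bar p\,x-\eta(1-\eta\bar p)\bigl(\bar p\,x+\lambda c\delta\bigr).
\]

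\textbf{Fixed point.} Setting $x^{r+1}=x^r$ yields
\[
x^\star=-\frac{(1-\eta\bar p)\,\lambda c\delta}{\bar p\,(2-\eta\bar p)}.
\]
This is nonzero whenever $\lambda>0$, $c\neq0$, $\delta\neq0$ and $\eta\bar p<1$. Since $\nabla f(x^\star)=x^\star\neq0$, the point is nonstationary. With $P^r=\bar p$ constant, the pair $(x^\star,\bar p)$ is a fixed point of the full FedMIX-P update, including the preconditioner aggregation. Moreover, the map is a contraction for small $\eta$, so iterates in fact converge to $x^\star$. The fixed point is still present because $x^\star$ scales like $\lambda$ but is independent of $R$; only a vanishing $\lambda$ removes it.

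\textbf{Main obstacle.} The delicate part is making sure the coupling term $\tfrac12\sum_i(h_i-\bar p)c_i=\lambda c\delta$ truly survives. This requires three things: (i) $K\ge2$, so that a local state is ever applied; (ii) client-dependent states correlated with the heterogeneity $c_i$, supplied by $p_1\neq p_2$ together with $c_1\neq c_2$; and (iii) a server reference that stays fixed, which I obtain from the constant state update and the choice $P^0=\bar p$. I will also check that this state update is a legitimate instance of the algorithm and satisfies A.4–A.5, which is immediate here. The argument does not depend on the particular values chosen for $\eta,m,M$.
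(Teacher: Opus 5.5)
Your construction is correct and is essentially the paper's. Like the paper, you use two scalar quadratics with opposite linear tilts (the paper takes $\tfrac12(x\mp1)^2$ with $P^r\equiv1$), $K=2$, a constant shared reference, client-specific step-$1$ states $\bar p\pm\delta$, and an explicit affine round map whose fixed point is a nonzero multiple of $\lambda\delta$; the only cosmetic difference is that you keep the terminal states at $p_i$ and let server averaging restore $\bar p$, whereas the paper resets them to $1$. One harmless slip: because your terminal states $P_i^{r,2}=p_i$ differ from $P^r$, the A.5 sum over $k=1,\dots,K$ equals $\delta^2$, not $\tfrac12\delta^2$ (the latter is the applied-index drift $d_{r,-}$), and this still satisfies A.5 with a finite constant.
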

\begin{proof}
	Take $N=S=2$, $K=2$, and
	\[
	f_1(x)=\tfrac12(x-1)^2,\quad
	f_2(x)=\tfrac12(x+1)^2,\quad
	f(x)=\tfrac12(x^2+1).
	\]
	Thus $\nabla f(x)=x$, $L=1$, $f_*=1/2$, $\sigma=0$,
	and $\sigma_g^2=1$. Fix $0<\delta<1$ and define in
	every round
	\[
	P^r=P_1^{r,0}=P_2^{r,0}=1,\qquad
	P_1^{r,1}=1+\delta,\quad P_2^{r,1}=1-\delta,\qquad
	P_1^{r,2}=P_2^{r,2}=1.
	\]
	These are deterministic, predictable, and lie in
	$[1-\delta,1+\delta]$. The server's average of the
	uploaded $P_i^{r,2}$ is $P^{r+1}=1$, exactly as required by
	the algorithm, and A.5 holds with $\bar D_P=\delta^2/2$.
	The mixed applied operators are $H_i^{r,0}=1$ and
	$H_1^{r,1}=1+\lambda\delta$,
	$H_2^{r,1}=1-\lambda\delta$.
	Write $c_1=1,c_2=-1$ and choose $0<\eta<1$.
	The first local step gives
	\[
	x_i^{r,1}-c_i=(1-\eta)(x^r-c_i).
	\]
	Therefore the normalized server direction equals
	\begin{align*}
		d^r
		&=\tfrac14\bigl[(x^r-1)+(x^r+1)
		+(1-\eta)(1+\lambda\delta)(x^r-1)
		+(1-\eta)(1-\lambda\delta)(x^r+1)\bigr]\\
		&=\frac{2-\eta}{2}x^r
		-\frac{1-\eta}{2}\lambda\delta.
	\end{align*}
	Initialize at
	\[
	x^0=x_{\lambda,\delta}
	:=\frac{(1-\eta)\lambda\delta}{2-\eta}>0.
	\]
	Then $d^r=0$ and hence $x^r=x_{\lambda,\delta}$ for
	all $r$, even though
	$\nabla f(x^r)=x_{\lambda,\delta}\ne0$.
	The same construction applies to FedPO when each client
	retains $P_i^{r,2}=1$ and uses the local state sequence
	above, corresponding to $\lambda=1$. If instead
	$\lambda=O(R^{-1/2})$, this particular fixed-point
	obstruction also vanishes at rate $O(R^{-1/2})$.
\end{proof}

\paragraph{Scope of the comparison.}
The bound for FedPO allows a fixed bias floor; it does not
assert that every FedPO run converges to a nonstationary point.
Conversely, \eqref{eq:mix:rate} is an expected stationarity
guarantee for the linear SPD model, not a guarantee for a
nonlinear Muon orthogonalization map, clipping, momentum, or an
implementation that mixes optimizer states instead of their
induced linear operators. A separate analysis is needed for those
updates. If the implementation uses $\gamma\ne K\eta$, the
server update must also be analyzed separately.

\newpage

\subsection{FedMIX-P and optimizer-specific implementations}
\label{app:fedmix_implementations}

We distinguish the exact linear-operator algorithm from its
optimizer-specific extensions. The former mixes operators applied to
the current stochastic gradient and aggregates terminal operators.
The latter specify momentum, clipping, nonlinear transformations, and
state aggregation explicitly. No historical global update direction is
used in either formulation.

\begin{algorithm}[tb]
	\small
	\caption{FedMIX-P with exact linear-operator aggregation}
	\label{alg:fedmix_exact}
	\begin{algorithmic}[1]
		\Require $x^0$, shared operator $P^0$; rounds $R$; local steps $K$;
		clients $N$; sample size $S$; stepsizes $\eta_r,\gamma_r>0$;
		local weights $\lambda_r\in[0,1]$
		\For{$r=0,\ldots,R-1$}
		\State Sample $\mathcal S_r\subseteq[N]$ uniformly without replacement,
		$|\mathcal S_r|=S$
		\State Broadcast $(x^r,P^r)$
		\For{each client $i\in\mathcal S_r$ in parallel}
		\State $x_i^{r,0}\gets x^r$; $P_i^{r,0}\gets P^r$
		\For{$k=0,\ldots,K-1$}
		\State Sample $\xi_i^{r,k}$;
		$g_i^{r,k}\gets\nabla F_i(x_i^{r,k};\xi_i^{r,k})$
		\State $d_i^{r,k}\gets
		(1-\lambda_r)P^r g_i^{r,k}+\lambda_r P_i^{r,k}g_i^{r,k}$
		\State $x_i^{r,k+1}\gets x_i^{r,k}-\eta_r d_i^{r,k}$
		\State $P_i^{r,k+1}\gets
		\Call{UpdateOperator}{P_i^{r,k},g_i^{r,k}}$
		\EndFor
		\State Upload $\delta_i^r\gets x_i^{r,K}-x^r$ and $P_i^{r,K}$
		\EndFor
		\State $d^{r+1}\gets-(SK\eta_r)^{-1}
		\sum_{i\in\mathcal S_r}\delta_i^r$
		\State $x^{r+1}\gets x^r-\gamma_r d^{r+1}$
		\State $P^{r+1}\gets S^{-1}\sum_{i\in\mathcal S_r}P_i^{r,K}$
		\EndFor
		\Statex \textit{Setting $\gamma_r=K\eta_r$ recovers direct model averaging.
			The operator update must preserve the spectral bounds used by the theorem.}
	\end{algorithmic}
\end{algorithm}

\paragraph{Practical extensions.}
For practical SOAP, Sophia, and Muon variants, the server maintains a
shared state $s_G^r$ and holds its geometry fixed within round $r$.
Participating clients initialize their complete local states from this
reference. Both branches use the current local information, rather
than a previously aggregated direction.
The complete state-exchange protocol below is an uncompressed
specification; compressed variants require an additional, explicit rule.
The local baselines use the same block routines but retain their own
states between participations and omit shared-state communication.

\begin{algorithm}[tb]
	\small
	\caption{Local baselines and practical FedMIX-P variants}
	\label{alg:fedmix_practical}
	\begin{algorithmic}[1]
		\Require $x^0$; rounds $R$; local steps $K$; clients $N$; sample size $S$;
		stepsizes $\eta_r,\gamma_r>0$; weights $\lambda_r\in[0,1]$;
		optimizer $\mathsf O\in\{\mathrm{SOAP},\mathrm{Sophia},\mathrm{Muon}\}$;
		mode $\mathsf A\in\{\mathrm{Local},\mathrm{FedMIX\mbox{-}P}\}$;
		decoupled weight decay $w\geq0$
		\State Initialize shared state $s_G^0$ and all client states $s_i$
		using the initialization specified below
		\State Initialize persistent local-baseline counters $\tau_i\gets0$
		\For{$r=0,\ldots,R-1$}
		\State Sample $\mathcal S_r$ uniformly without replacement,
		$|\mathcal S_r|=S$; broadcast $x^r$
		\If{$\mathsf A=\mathrm{FedMIX\mbox{-}P}$}
		\State Also broadcast $s_G^r$; set $\ell\gets\lambda_r$
		\Else
		\State Set $\ell\gets1$
		\EndIf
		\For{each client $i\in\mathcal S_r$ in parallel}
		\State $x_i^{r,0}\gets x^r$
		\If{$\mathsf A=\mathrm{FedMIX\mbox{-}P}$}
		\State $s_i\gets\Call{Copy}{s_G^r}$;
		$s_{\rm ref}\gets\Call{Copy}{s_G^r}$;
		$\tau_i\gets rK$
		\Else
		\State $s_{\rm ref}\gets\Call{Copy}{s_i}$
		\Comment{Reference branch is unused}
		\EndIf
		\For{$k=0,\ldots,K-1$}
		\State Sample $\xi_i^{r,k}$;
		$g_i^{r,k}\gets\nabla F_i(x_i^{r,k};\xi_i^{r,k})$;
		$\tau_i\gets\tau_i+1$
		\If{$\mathsf O=\mathrm{SOAP}$}
		\State Apply \textsc{SOAPMixStep} to each matrix block
		and assemble $d_i^{r,k}$
		\ElsIf{$\mathsf O=\mathrm{Sophia}$}
		\State $d_i^{r,k}\gets
		\Call{SophiaMixStep}{g_i^{r,k},s_i,s_{\rm ref},\ell,
			x_i^{r,k},\tau_i}$
		\Else
		\State Apply \textsc{MuonMixStep} to each matrix block
		and assemble $d_i^{r,k}$
		\EndIf
		\State $x_i^{r,k+1}\gets(1-\eta_r w)x_i^{r,k}
		-\eta_r d_i^{r,k}$
		\EndFor
		\State Upload $\delta_i^r\gets x_i^{r,K}-x^r$
		\If{$\mathsf A=\mathrm{FedMIX\mbox{-}P}$}
		\State Also upload the terminal state $s_i$
		\Else
		\State Retain $s_i$ and $\tau_i$ for future participation
		\EndIf
		\EndFor
		\State $x^{r+1}\gets x^r+
		\frac{\gamma_r}{SK\eta_r}\sum_{i\in\mathcal S_r}\delta_i^r$
		\If{$\mathsf A=\mathrm{FedMIX\mbox{-}P}$}
		\State $s_G^{r+1}\gets
		\Call{AggregateState}{\mathsf O,\{s_i:i\in\mathcal S_r\}}$
		\EndIf
		\EndFor
		\Statex \textit{Reference states remain immutable within each round.
			The routines below update local states in place.}
	\end{algorithmic}
\end{algorithm}

\subsubsection{SOAP: mixing induced linear geometry}
\label{app:fedmix_soap}

For a matrix block, store
$s=(C_L,C_R,Q_L,Q_R,M,V)$, where $M$ is an ambient-coordinate
first moment and $V$ is a second moment in the coordinates defined by
$Q_L,Q_R$. Storing $M$ in ambient coordinates is equivalent to exact
first-moment transport under a change of basis.
Initialize $C_L=C_R=0$, $Q_L=I_m$, $Q_R=I_n$, and $M=V=0$.
Define the linear map
\[
\mathcal P_s(Z)=Q_L\left[
(Q_L^\top ZQ_R)\oslash(\sqrt V+\epsilon)
\right]Q_R^\top.
\]
The practical SOAP update mixes $\mathcal P_{s_{\rm ref}}$
and $\mathcal P_{s_i}$ on the same current momentum matrix.
Its geometry states are updated after the model direction is formed.

\begin{algorithm}[tb]
	\small
	\caption{FedMIX-P SOAP block step}
	\label{alg:fedmix_soap_block}
	\begin{algorithmic}[1]
		\Require $\beta_1,\beta_2\in[0,1)$; $\epsilon>0$;
		basis-refresh interval $f_Q\in\mathbb N_+$
		\Function{SOAPMixStep}{$G,s_i,s_{\rm ref},\ell,\tau$}
		\State Unpack $s_i=(C_L,C_R,Q_L,Q_R,M,V)$
		\State $M^+\gets\beta_1M+(1-\beta_1)G$
		\State $D\gets(1-\ell)\mathcal P_{s_{\rm ref}}(M^+)
		+\ell\mathcal P_{s_i}(M^+)$
		\State $C_L\gets\beta_2C_L+(1-\beta_2)GG^\top$
		\State $C_R\gets\beta_2C_R+(1-\beta_2)G^\top G$
		\If{$\tau\bmod f_Q=0$}
		\State $Q_L^+\gets\Call{Eigenbasis}{C_L}$;
		$Q_R^+\gets\Call{Eigenbasis}{C_R}$
		\State $A\gets(Q_L^+)^\top Q_L$;
		$B\gets Q_R^\top Q_R^+$
		\State $V\gets(A\odot A)V(B\odot B)$
		\Comment{Diagonal second-moment transport}
		\State $Q_L\gets Q_L^+$; $Q_R\gets Q_R^+$
		\EndIf
		\State $\widehat G\gets Q_L^\top GQ_R$
		\State $V\gets\beta_2V+(1-\beta_2)
		(\widehat G\odot\widehat G)$
		\State Store $(C_L,C_R,Q_L,Q_R,M^+,V)$ in $s_i$
		\State \Return $D$
		\EndFunction
	\end{algorithmic}
\end{algorithm}

\paragraph{Aggregation in a common coordinate system.}
For each block, average the terminal statistics and ambient moments:
\[
C_{L,G}^+=\frac1S\sum_iC_{L,i},\qquad
C_{R,G}^+=\frac1S\sum_iC_{R,i},\qquad
M_G^+=\frac1S\sum_iM_i,
\]
where sums run over $i\in\mathcal S_r$.
Set $Q_{L,G}^+=\operatorname{Eigenbasis}(C_{L,G}^+)$ and
$Q_{R,G}^+=\operatorname{Eigenbasis}(C_{R,G}^+)$, and define
\[
A_i=(Q_{L,G}^+)^\top Q_{L,i},\qquad
B_i=Q_{R,i}^\top Q_{R,G}^+,
\qquad
V_G^+=\frac1S\sum_i(A_i\odot A_i)V_i(B_i\odot B_i).
\]
The next reference is
$s_G^+=(C_{L,G}^+,C_{R,G}^+,Q_{L,G}^+,Q_{R,G}^+,M_G^+,V_G^+)$.
The second-moment transport discards untracked cross-coordinate
second moments. Averaging $V_i$ without transporting their coordinates
would not be a valid common-coordinate aggregation.

\subsubsection{Sophia: mixing inverse curvature before clipping}
\label{app:fedmix_sophia}

We specify a Sophia-H extension with state $s=(m,h)$,
initialized by $m=h=0$.
Define the elementwise inverse-curvature scale
\[
p(h)=\boldsymbol 1\oslash\max\{\rho h,\epsilon\boldsymbol 1\},
\qquad \rho>0.
\]
The mixed scale is $(1-\ell)p(h_{\rm ref})+\ell p(h_i)$.
Clipping is applied once, after the two preconditioned contributions
are combined. This specifies an inverse-scale mixture, rather than
an inverse of an averaged curvature estimate.

\begin{algorithm}[tb]
	\small
	\caption{Local Sophia-H and FedMIX-P Sophia-H step}
	\label{alg:fedmix_sophia_step}
	\begin{algorithmic}[1]
		\Require $\beta_1,\beta_2\in[0,1)$; $\rho,\epsilon>0$;
		curvature-refresh interval $f_H\in\mathbb N_+$
		\Function{SophiaMixStep}{$g,s_i,s_{\rm ref},\ell,x,\tau$}
		\State Unpack $s_i=(m_i,h_i)$ and $s_{\rm ref}=(m_{
			m ref},h_{
			m ref})$
		\State $m_i^+\gets\beta_1m_i+(1-\beta_1)g$
		\State $p_i\gets\boldsymbol1\oslash
		\max\{\rho h_i,\epsilon\boldsymbol1\}$
		\State $p_{
			m ref}\gets\boldsymbol1\oslash
		\max\{\rho h_{
			m ref},\epsilon\boldsymbol1\}$
		\State $z\gets[(1-\ell)p_{
			m ref}+\ell p_i]\odot m_i^+$
		\State $d\gets\operatorname{clip}(z,-1,1)$
		\Comment{Coordinatewise clipping after mixing}
		\If{$(\tau-1)\bmod f_H=0$}
		\State Sample a curvature mini-batch $\zeta$ independently
		of the current gradient mini-batch
		\State Draw $u\sim\mathcal N(0,I)$ independently
		\State $v\gets\nabla_x^2F_i(x;\zeta)u$
		\Comment{Hessian--vector product}
		\State $\widehat h\gets u\odot v$
		\State $h_i^+\gets\beta_2h_i+(1-\beta_2)\widehat h$
		\Else
		\State $h_i^+\gets h_i$
		\EndIf
		\State Store $(m_i^+,h_i^+)$ in $s_i$
		\State \Return $d$
		\EndFunction
	\end{algorithmic}
\end{algorithm}

For Sophia, \textsc{AggregateState} returns
\[
m_G^+=\frac1S\sum_{i\in\mathcal S_r}m_i,\qquad
h_G^+=\frac1S\sum_{i\in\mathcal S_r}h_i.
\]
Local Sophia-H uses $\ell=1$ and retained local states.
The refreshed curvature is used by subsequent steps.
Sophia-G can replace the Hutchinson estimator with a
Gauss--Newton--Bartlett estimator; that replacement must specify
the predictive-label sampling and loss/batch normalization.
The empirical squared gradient on observed labels is not a substitute
for a specified Sophia-G curvature estimator.

\subsubsection{Muon: mixing current momentum-conditioned maps}
\label{app:fedmix_muon}

For each matrix block, initialize the momentum state $M=0$.
The shared branch incorporates the current gradient into the frozen
reference momentum, while the local branch incorporates the same
gradient into the evolving local momentum. Thus the shared branch
does not reuse an old orthogonalized update direction.

\begin{algorithm}[tb]
	\small
	\caption{Local Muon and nonlinear FedMIX-P Muon block step}
	\label{alg:fedmix_muon_step}
	\begin{algorithmic}[1]
		\Require $\mu\in[0,1)$; $\epsilon>0$; iterations $s\in\mathbb N_+$;
		block scaling $a(m,n)>0$
		\Function{MuonMixStep}{$G,M_i,M_{
				m ref},\ell$}
		\State $(m,n)\gets\operatorname{shape}(G)$
		\State $M_i^+\gets\mu M_i+(1-\mu)G$
		\State $Z_{
			m ref}\gets\mu M_{
			m ref}+(1-\mu)G$
		\State $U_i\gets\Call{NewtonSchulz}{M_i^+,s,\epsilon}$
		\If{$\ell<1$}
		\State $U_{
			m ref}\gets
		\Call{NewtonSchulz}{Z_{
				m ref},s,\epsilon}$
		\State $D\gets a(m,n)[(1-\ell)U_{
			m ref}+\ell U_i]$
		\Else
		\State $D\gets a(m,n)U_i$
		\EndIf
		\State Store $M_i^+$ as the local momentum state
		\State \Return $D$
		\EndFunction
		\Statex
		\Function{NewtonSchulz}{$G,s,\epsilon$}
		\State $(m,n)\gets\operatorname{shape}(G)$
		\State $X\gets G/(\|G\|_F+\epsilon)$
		\If{$m>n$}
		\State $X\gets X^\top$
		\EndIf
		\For{$j=1,\ldots,s$}
		\State $A\gets XX^\top$
		\State $X\gets\tfrac12(3I-A)X$
		\EndFor
		\If{$m>n$}
		\State $X\gets X^\top$
		\EndIf
		\State \Return $X$
		\EndFunction
	\end{algorithmic}
\end{algorithm}

For Muon, \textsc{AggregateState} returns
$M_G^+=S^{-1}\sum_{i\in\mathcal S_r}M_i$ for each block.
The displayed Newton--Schulz routine is the cubic polar-iteration
variant. Other polynomials, Nesterov inputs, and block scalings
define different implementations and must be stated explicitly.
Rank-deficient inputs need not yield full-rank orthogonal outputs.

\paragraph{Relation to the linear-operator theorem.}
Algorithm~\ref{alg:fedmix_exact} is the algorithm covered by the
linear-SPD analysis, subject to its stated assumptions and stepsizes.
Algorithms~\ref{alg:fedmix_practical}--\ref{alg:fedmix_muon_step}
specify proposed practical extensions, not implementations validated
by the existing experimental tables.
SOAP and Sophia mix induced linear geometry before their additional
momentum or clipping operations. Muon mixes nonlinear,
momentum-conditioned maps.
In the practical state-aggregation rules, generally
\[
\mathcal P_{\operatorname{AggregateState}(\{s_i\})}
\ne\frac1S\sum_i\mathcal P_{s_i}.
\]
Therefore these rules do not reproduce exact terminal-operator
aggregation. Their convergence requires additional analysis, even
when their induced geometries are positive definite.
For a theorem-matched specialization, use current gradients instead
of momentum, remove clipping and decoupled weight decay, enforce
uniform spectral bounds, and aggregate the terminal applied operators
exactly as in Algorithm~\ref{alg:fedmix_exact}.

\paragraph{Reproducibility details.}
Report all EMA coefficients, damping values, refresh intervals,
curvature-estimation batches, block scalings, state-exchange precision,
and parameter-group rules. SOAP uses uncorrected moments here;
adding bias correction changes this specification.
Non-matrix parameters in a Muon implementation require a specified
auxiliary optimizer. A shared-only setting means shared geometry;
SOAP and Sophia may still maintain evolving local momentum.
Measured communication and computation costs must be recomputed
for these full-state and, for Muon, potentially two-transform routines.

\section{Related Work}
\label{sec:related_work}

\paragraph{Communication-efficient federated optimization.}
Communication-efficient federated learning commonly allows clients to perform
multiple local updates between aggregation rounds. FedBCGD reduces the amount
of information transmitted in each round by updating and communicating
parameter blocks, and further incorporates drift control and variance
reduction in its accelerated variant~\citep{liu2024fedbcgd}. This approach
addresses the size of each message. In contrast, \texttt{CTP-FL} communicates
one model-sized vector per participating client and changes where local
gradients are evaluated: clients query the same server-defined predictive
trajectory, so their updates can be averaged without first combining models
that have followed different local trajectories.

\paragraph{Alignment under heterogeneous data.}
Several recent methods study different forms of local--global misalignment.
FedSWA and FedMoSWA use stochastic weight averaging and momentum-based control
to improve generalization under highly heterogeneous data~\citep{liu2025fedswa}.
FedNSAM examines the mismatch between local and global flatness and uses a
global Nesterov direction to improve their consistency~\citep{liu2025fednsam}.
These methods primarily target the properties of the resulting solution,
including flatness and generalization. Our focus is the geometry of gradient
evaluation during a communication round: when all clients evaluate at common
points, averaging their gradients estimates the gradient of the global
objective at those points, irrespective of how different the individual
client gradients are.

\paragraph{Federated adaptive and structured optimizers.}
FedAdamW combines local correction, decoupled weight decay, and aggregation
of second-moment estimates for federated large-model
training~\citep{liu2026fedadamw}. FedMuon exploits matrix orthogonalization
and local--global alignment to improve federated optimization of
matrix-structured parameters~\citep{liu2025fedmuon}. FedPAC identifies
preconditioner drift as a source of instability when local second-order
optimizers induce incompatible client geometries, and proposes
preconditioner alignment and update correction~\citep{liu2026fedpac}.
Unlike these optimizer-specific mechanisms, \texttt{CTP-FL} applies to
stochastic-gradient evaluations without transmitting moments or
preconditioners. Its common trajectory aligns the \emph{locations} of
gradient evaluation rather than optimizer states.

\paragraph{Global flatness and privacy.}
DP-FedPGN encourages globally flat solutions in client-level differentially
private federated learning through a global gradient-norm
penalty~\citep{liu2025dpfedpgn}. Its objective and privacy accounting are
different from ours. We cite it because it likewise illustrates that a
quantity defined by the global objective need not be faithfully represented
by independently optimized local objectives.
\end{document}